\definecolor{linkColor}{HTML}{E74C3C}
\definecolor{pearcomp}{HTML}{B97E29}
\definecolor{citeColor}{HTML}{2980B9}
\definecolor{urlColor}{HTML}{1D2DEC}
\definecolor{conjColor}{HTML}{9ab569}
\newtheoremstyle{break}
  {\topsep}{\topsep}%
  {\itshape}{}%
  {\bfseries}{}%
  {\newline}{}%
\tikzset{
  invisible/.style={opacity=0},
  visible on/.style={alt={#1{}{invisible}}},
  alt/.code args={<#1>#2#3}{%
    \alt<#1>{\pgfkeysalso{#2}}{\pgfkeysalso{#3}}
  },
}
\newtheorem{definition}{\textbf{Definition}}%
\newtheorem{lemma}{\textbf{Lemma}}%
\newtheorem{theorem}{\textbf{Theorem}}%
\newtheorem*{insight*}{\textbf{Observation}}
\newtheorem{proposition}{\textbf{Proposition}}[section]
\newtheorem{assumption}{Assumption}
\newtheorem*{lemmai*}{\textbf{Lemma (informal)}}
\newtheorem{example}{\textbf{Example}}
\newcommand{\opt}{\text{opt}}
\newcommand{\reals}{{\mathbb{R}}}
\newcommand{\Bern}{\text{Bern}}
\newcommand{\Bin}{\text{Bin}}
\newcommand{\cX}{\mathcal{X}}
\newcommand{\cL}{\mathcal{L}}
\newcommand{\cW}{\mathcal{W}}
\newcommand{\cS}{\mathcal{S}}
\newcommand{\cT}{\mathcal{T}}
\newcommand{\cA}{\mathcal{A}}
\newcommand{\cE}{\mathcal{E}}
\newcommand{\cD}{\mathcal{D}}
\newcommand{\cF}{\mathcal{F}}
\newcommand{\Var}{\mathsf{Var}}
\newcommand{\stat}{\text{stat}}
\newcommand{\eff}{{\ell_2}}
\newcommand{\Unif}{\textsf{Unif}}
\newcommand{\PO}{\textsf{PO}}
\crefname{assumption}{assumption}{assumptions}
\crefname{table}{table}{tables}
\renewcommand{\cite}[1]{\citep{#1}}
\definecolor{cm}{RGB}{0,0,200}
\definecolor{purple}{RGB}{200,0,200}
\newcommand{\vast}{\bBigg@{2.5}}
\newcommand{\Vast}{\bBigg@{5}}
\DeclareMathOperator{\E}{\mathbb{E}}
\DeclareMathOperator*{\argmax}{arg\,max}
\DeclareMathOperator*{\argmin}{arg\,min}
\newcommand\blfootnote[1]{%
  \begingroup
  \renewcommand\thefootnote{}\footnote{#1}%
  \addtocounter{footnote}{-1}%
  \endgroup
}
\title{Importance Weighted Actor-Critic for Optimal Conservative Offline Reinforcement Learning}
\author{Hanlin Zhu$^\dagger$ \quad
Paria Rashidinejad$^\dagger$ \quad
Jiantao Jiao$^{\dagger, \ddagger}$ 
 \blfootnote{Emails: \texttt{\{hanlinzhu,paria.rashidinejad,jiantao\}@berkeley.edu}}\\ { }\\
$^\dagger$Department of Electrical Engineering and Computer Sciences, UC Berkeley\\
$^\ddagger$Department of Statistics, UC Berkeley\\ { } \\
}
\date{\today}
\begin{document}

\maketitle

\begin{abstract}
We propose A-Crab (Actor-Critic Regularized by Average Bellman error), a new algorithm for offline reinforcement learning (RL) in complex environments with insufficient data coverage. Our algorithm combines the marginalized importance sampling framework with the actor-critic paradigm, where the critic returns evaluations of the actor (policy) that are pessimistic relative to the offline data and have a small average (importance-weighted) Bellman error. Compared to existing methods, our algorithm simultaneously offers a number of advantages:
\begin{enumerate}
    \item It is practical and achieves the optimal statistical rate of $1/\sqrt{N}$---where $N$ is the size of offline dataset---in converging to the best policy covered in the offline dataset, even when combined with general function approximators.
    \item It relies on a weaker \textit{average} notion of policy coverage (compared to the $\ell_\infty$ single-policy concentrability) that exploits the structure of policy visitations.
    \item It outperforms the data-collection behavior policy over a wide-range of hyperparameter and is the first algorithm to do so \textit{without} solving a minimax optimization problem. 
\end{enumerate}
\end{abstract}

\section{Introduction}
\label{sec:intro}

Offline reinforcement learning (RL) algorithms aim at learning a good policy based only on historical interaction data. This paradigm allows for leveraging previously-collected data in learning policies while avoiding possibly costly and dangerous trial and errors and finds applications in a wide range of domains from precision medicine~\citep{tang2022leveraging} to robotics~\citep{sinha2022s4rl} to climate~\citep{rolnick2022tackling}.
Despite wide applicability, offline RL has yet to achieve success stories akin to those observed in online settings that allow for trials and errors~\citep{mnih2013playing,silver2016mastering,ran2019deepee,mirhoseini2020chip,oh2020discovering,fawzi2022discovering,degrave2022magnetic}.

Enabling offline RL for complex real-world problems requires developing algorithms that first, handle complex high-dimensional observations and second, have minimal requirements on the data coverage and ``best'' exploit the information available in data. Powerful function approximators such as deep neural networks are observed to be effective in handling complex environments and deep RL algorithms have been behind the success stories mentioned above. This motivates us to investigate provably optimal offline RL algorithms that can be combined with general function approximators and have minimal requirements on the coverage and size of the offline dataset.

In RL theory, the data coverage requirements are often characterized by concentrability definitions~\citep{munos2007performance, scherrer2014approximate}. For a policy $\pi$, the ratio of the state-action occupancy distribution $d^\pi$ of $\pi$ to the dataset distribution $\mu$, denoted by $w^\pi = d^\pi / \mu$, is used to define concentrability. The most widely-used definition is $\ell_{\infty}$ concentrability, defined as the infinite norm of $w^\pi$, i.e., $C^{\pi}_{\ell_\infty} = \| w^\pi \|_{\infty}$. Many earlier works on offline RL require all-policy $\ell_\infty$ concentrability (i.e., $C^\pi_{\ell_\infty}$ is bounded for all candidate policy $\pi$)
\citep{scherrer2014approximate,liu2019neural,chen2019information,jiang2019value,wang2019neural,liao2020batch,zhang2020variational}
 or stronger assumptions such as a uniform lower bound on $\mu(a|s)$~\citep{xie2020batch}. However, such all-policy concentrability assumptions are often violated in practical scenarios, and in most cases, only \emph{partial} dataset coverage is guaranteed.
 
 To deal with partial data coverage, recent works use conservative algorithms, which try to avoid policies not well-covered by the dataset, to learn a good policy with much weaker dataset coverage requirements~\citep{kumar2020conservative,jin2020pessimism}. 
 In particular, algorithms developed based on the principle of pessimism in the face of uncertainty are shown to find the best \textit{covered} policy (or sometimes they require coverage of the optimal policy) (e.g., \citet{rashidinejad2021bridging,rashidinejad2022optimal,zhan2022offline,chen2022offline}). However, most of these works use $\ell_\infty$ concentrability to characterize the dataset coverage. This could still be restrictive even if we only require single-policy concentrability, since the $\ell_\infty$ definition 
characterizes coverage in terms of the worst-case maximum ratio over all states and actions. Other milder variants of the single-policy concentrability coefficient are proposed by~\citet{xie2021bellman,uehara2021pessimistic} which consider definitions that exploit the structure of the function class to reduce the coverage requirement and involve taking a maximum over the functions in the hypothesis class instead of all states and actions. However, as we show in \Cref{sec:prelim_coverage}, when the function class is very expressive, these variants will degenerate to $\ell_\infty$ concentrability. Moreover, previous algorithms requiring milder variants of single-policy concentrability are either computationally intractable~\citep{xie2021bellman,uehara2021pessimistic} or suffer a suboptimal rate of suboptimality~\citep{cheng2022adversarially}. Therefore, a natural and important question is raised:
\begin{quote}
   \emph{ Is there a computationally efficient and statistically optimal algorithm that can be combined with general function approximators and have minimal requirements on dataset coverage?}
\end{quote}
We answer this question affirmatively by proposing a novel algorithm named A-Crab (Actor-Critic Regularized by Average Bellman error). We also discuss more related works in \Cref{app:related-work}.

\begin{table}[h]
\label[table]{tbl:results-compare}
\caption{Comparison of provable offline RL algorithms with general function approximation.}
\centering
{\renewcommand{\arraystretch}{1.6}
\scalebox{0.72}{
\begin{tabular}{cccccc}
\hline
Algorithm                 & Computation  & Any covered policy         & Coverage assumption & Policy improvement & Suboptimality                                           \\ \hline
\citet{xie2021bellman}           & Intractable            & Yes       & single-policy, $C^\pi_{\text{Bellman}}$ &  Yes  & $O \left(\frac{1}{\sqrt{N}}\right)$                     \\ \hline
\citet{uehara2021pessimistic}           & Intractable           & Yes    & single-policy, $\ell_\infty$ and $\ell_2$ &  Yes   & $O \left(\frac{1}{\sqrt{N}}\right)$                     \\ \hline
\citet{chen2022offline}          & Intractable           & No & single-policy, $\ell_\infty$ &   No & $O \left(\frac{1}{\sqrt{N} \text{gap}(Q^\star)}\right)$ \\ \hline
\citet{zhan2022offline}         & Efficient                  & Yes & two-policy, $\ell_\infty$  &  Yes & $O \left(\frac{1}{N^{1/6}}\right)$                      \\ \hline
\citet{cheng2022adversarially}        & Efficient                  & Yes       & single-policy,  $C^\pi_{\text{Bellman}}$ &  Yes \& Robust  & $O \left(\frac{1}{N^{1/3}}\right)$                      \\ \hline
\citet{rashidinejad2022optimal}                & Efficient                  & No       & single-policy, $\ell_\infty$  &  No  & $O \left(\frac{1}{\sqrt{N}}\right)$                     \\ \hline
\citet{ozdaglar2022revisiting}           & Efficient                   & No      & single-policy, $\ell_\infty$   & No  & $O \left(\frac{1}{\sqrt{N}}\right)$                 
\\ \hline 
A-Crab (this work)                 & Efficient                   & Yes      & single-policy, $\ell_2$   & Yes \& Robust & $O \left(\frac{1}{\sqrt{N}}\right)$                     \\ \hline
\end{tabular}}}
\end{table}

\subsection{Contributions}

In this paper, we build on the adversarially trained actor-critic (ATAC) algorithm of \citet{cheng2022adversarially} and combine it with the marginalized importance sampling (MIS) framework~\citep{xie2019towards, chen2022offline, rashidinejad2022optimal}. In particular, we replace the squared Euclidean norm of the Bellman-consistency error term in the critic's objective of the ATAC algorithm with an importance-weighted average Bellman error term. We prove that this simple yet critical modification of the ATAC algorithm enjoys the properties highlighted below (see \Cref{tbl:results-compare} for comparisons with previous works).

 \textbf{1. Optimal statistical rate in competing with the best covered policy:} In \Cref{thm:main_thm}, we prove that our A-Crab algorithm, which uses average Bellman error, enjoys an optimal statistical rate of $1/\sqrt{N}$. In contrast, we prove in \Cref{prop_suboptimality_ATAC} that the ATAC algorithm, which uses the squared Bellman error, fails to achieve the optimal rate in certain offline learning instances. As \citet{cheng2022adversarially} explains, the squared Bellman-error regularizer appears to be the culprit behind the suboptimal rate of ATAC being $1/N^{1/3}$. Moreover, our algorithm improves over any policy covered in the data. This is in contrast to the recent work \citet{rashidinejad2022optimal}, which proposes an algorithm based on the MIS framework that achieves the $1/\sqrt{N}$ rate only when the optimal policy (i.e. the policy with the highest expected rewards) is covered in the data. 
    
    \textbf{2. A weaker notion of data coverage that exploits visitation structure:} As we discussed earlier, $\ell_\infty$ concentrability notion is used in many prior works \citep{rashidinejad2021bridging, chen2022offline, ozdaglar2022revisiting,  rashidinejad2022optimal}. Our importance-weighted average Bellman error as well as using Bernstein inequality in the proof relies on guarantees in terms of an $\ell_2$ single-policy concentrability notion that is weaker than the $\ell_\infty$ variant. In particular, we have $C^\pi_{\ell_\infty} = \|w^\pi\|_\infty$ and $C^\pi_{\ell_2} = \|w^\pi\|_{2, \mu}$, where $\| \cdot \|_{2,\mu}$ is the weighted 2-norm w.r.t. the dataset distribution $\mu$. The latter implies that the coverage coefficient only matters as much as it is covered by the dataset. Moreover, by the definition of $w^\pi$, we can obtain that $(C^\pi_{\ell_2})^2 = \E_{d^\pi}[w^\pi(s,a)]$, which provides another explanation of $\ell_2$ concentrability that the coverage coefficient only matters as much as it is \emph{actually} visited by the policy. There are also other notions of single-policy concentrability
    exploiting function approximation structures to make the coverage coefficient smaller (e.g., $C^\pi_{\text{Bellman}}$ in \citet{xie2021bellman}). However, these notions degenerate to $C^{\pi}_{\ell_\infty}$ as the function class gets richer (see \Cref{fig:comparison_of_C} for an intuitive comparison of different notions of concentrability and \Cref{sec:prelim_coverage} for a rigorous proof).
    
    \begin{figure}
        \centering
    
    \begin{tikzpicture}[domain=0:4]
      \draw[very thin,color=gray];
    
      \draw[->] (0,0) -- (4.2,0) node[right] {complexity of $\cF$};
      \draw[->] (0,0) -- (0,4) node[above] {concentrability};
    
      \draw[color=red]    plot (\x,3)             node[right] {$C^\pi_{\ell_\infty}$};
      \draw[color=blue]   plot (\x,1)    node[right] {$C^\pi_{\ell_2}$};
      \draw[color=orange] plot (\x,{3.2-3.2/(\x+1)}) node[right] {$C^\pi_{\text{Bellman}}$};
    \end{tikzpicture} \vspace{-2mm}
        \caption{An intuitive comparison of $C^\pi_{\ell_\infty}$, $C^\pi_{\ell_2}$ and $C^\pi_{\text{Bellman}}$ for a fixed policy $\pi$. Note that $C^\pi_{\ell_\infty}$, $C^\pi_{\ell_2}$ are independent of the function class $\cF$, and $C^\pi_{\ell_2}$ is always smaller than$C^\pi_{\ell_\infty}$. $C^\pi_{\text{Bellman}}$ grows as $\cF$ gets richer and converges to $C^\pi_{\ell_\infty}$. } %
        \label{fig:comparison_of_C}
    \end{figure}
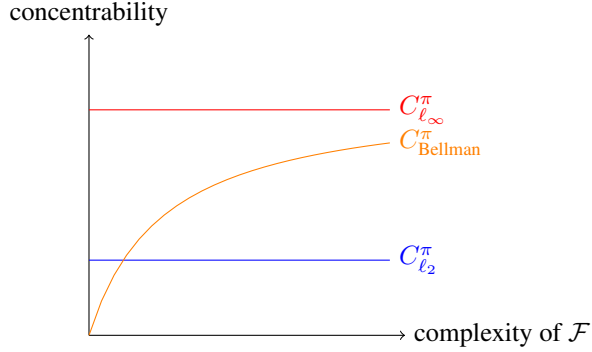

    \textbf{3. Robust policy improvement:} Policy improvement (PI) refers to the property that an offline RL algorithm (under a careful choice of specific hyperparameters) can always improve upon the behavior policy that is used to collect
    the data. In particular, robust policy improvement (RPI) means the PI property holds under a wide range of the choice of specific hyperparameters~\citep{cheng2022adversarially,xie2022armor}. Similar to the ATAC algorithm in \citet{cheng2022adversarially}, our approach enjoys the robust policy improvement guarantee as shown in \Cref{thm:RPI}.

        \textbf{4. Inheriting many other benefits of adversarially-trained actor-critic:} 
    Since our algorithm is based on ATAC with a different choice of regularizer, it can be easily implemented as ATAC to be applied to practical scenarios, and we provide experimental results in \Cref{sec:exp}. Also, our algorithm is robust to model misspecification and does not require the completeness assumption (Assumption 2 in \citet{cheng2022adversarially}) on the value function class, which makes it more practical. Moreover, our algorithm can learn a policy that outperforms any other policies well covered by the dataset.

\section{Background}
\label{sec:prelim}

\textbf{Notation.} We use $\Delta(\cX)$ to denote the probability simplex over a set $\cX$ and use $\Unif(\cX)$ to denote the uniform distribution over $\cX$. We denote by $\|\cdot \|_{2, \mu} = \sqrt{\E_{\mu} [(\cdot)^2]}$ the Euclidean norm weighted by distribution $\mu$. We use the notation $x \lesssim y$ when there exists constant $c > 0$ such that $x \leq c y$ and $x \gtrsim y$ if $y \lesssim x$ and denote $x \asymp y$ if $x \lesssim y$ and $y \lesssim x$ hold simultaneously. We also use standard $O(\cdot)$ notation to hide constants and use $\Tilde{O}(\cdot)$ to suppress logarithmic factors.

\subsection{Markov decision process} 

An infinite-horizon discounted MDP is described by a tuple $M = (\cS, \cA, P, R, \gamma, \rho)$, where $\cS$ is the state space, $\cA$ is the action space, $P: \cS \times \cA \to \Delta(\cS)$ is the transition kernel, $R: \cS \times \cA \to \Delta([0,1])$ encodes a family of reward distributions with $r: \cS \times \cA \to [0,1]$ as the expected reward function, $\gamma \in [0,1)$ is the discount factor and $\rho: \cS \to [0,1]$ is the initial state distribution. We assume $\cA$ is finite while allowing $\cS$ to be arbitrarily complex. A stationary (stochastic) policy $\pi: \cS \to \Delta(\cA)$ specifies a distribution over actions in each state. Each policy $\pi$ induces a (discounted) occupancy density over state-action pairs $d^\pi: \cS \times \cA \to [0,1]$ defined as $d^\pi(s,a) \coloneqq (1-\gamma)\sum_{t=0}^\infty \gamma^t P_t(s_t = s, a_t = a; \pi)$,
where $P_t(s_t = s, a_t = a; \pi)$ denotes the visitation probability of state-action pair $(s,a)$ at time step $t$, starting at $s_0 \sim \rho(\cdot)$ and following $\pi$. We also write $d^\pi(s) = \sum_{a \in \cA} d^\pi(s,a)$ to denote the (discounted) state occupancy, and use $\E_\pi[\cdot]$ as a shorthand of $\E_{(s,a)\sim d^{\pi}}[\cdot]$ or $\E_{s\sim d^{\pi}}[\cdot]$.

The value function of a policy $\pi$ is the discounted cumulative reward gained by executing that policy $V^\pi(s) \coloneqq \E \left[\sum_{t=0}^\infty \gamma^t r_t \; | \; s_0 = s , a_t \sim \pi(\cdot |s_t), \; \forall \; t\geq 0 \right]$ starting at state $s \in \cS$ where $r_t = R(s_t, a_t)$. Similarly, we define $Q$ function of a policy as the expected cumulative reward gained by executing that policy starting from a state-action pair $(s,a)$, i.e., 
$Q^\pi(s,a) \coloneqq \E \left[\sum_{t=0}^\infty \gamma^t r_t \; | \; s_0 = s , a_0 = a,  a_t \sim \pi(\cdot |s_t), \; \forall \; t > 0 \right].$
We write $J(\pi) \coloneqq (1-\gamma) \E_{s\sim \rho}[V^\pi(s)] = \E_{(s,a) \sim d^\pi} [r(s,a)]$ to represent the (normalized) average value of policy $\pi$. We denote by $\pi^\star$ the optimal policy that maximizes the above objective and use the shorthand $V^\star \coloneqq V^{\pi^\star}, Q^\star \coloneqq Q^{\pi^\star}$ to denote the optimal value function and optimal $Q$ function respectively.

\subsection{Function approximation}

In modern RL, the state space $\cS$ is usually large or infinite, making the classic tabular RL algorithms not scalable since their sample complexity depends on the cardinality of $\cS$.  Therefore, (general) function approximation is necessary for real-world scenarios with huge state space.
In this paper, we assume access to three function classes: a function class $\cF \subseteq \{f: \cS \times \cA \rightarrow [0, V_{\max}]\}$ that models the (approximate) Q-functions of policies, a function class $\cW \subseteq \{w: \cS \times \cA \rightarrow [0, B_w]\}$\footnote{Without loss of generality, we always assume that the all-one function is contained in $\cW$.} that represents marginalized importance weights with respect to data distribution, and a policy class $\Pi \subseteq \{\pi: \cS \rightarrow \Delta(\cA)\}$ consisting of candidate policies. Our framework combines the marginalized importance sampling framework (e.g., \citet{zhan2022offline,rashidinejad2022optimal}) with \textit{actor-critic} methods~\citep{xie2021bellman,cheng2022adversarially}, which improves and selects among a set of candidate policies by successive computation of their Q-functions. 

For any function $f \in \cF$ and any policy $\pi \in \Pi$, we denote $f(s,\pi) = \sum_{a\in\cA} \pi(a|s) f(s,a)$ for any $s \in \cS$ and denote Bellman operator $\cT^\pi: \reals^{\cS\times\cA}\to \reals^{\cS\times\cA}$ as 
\begin{align}
\label{eq:Bellman_equation}
    (\cT^\pi f)(s,a) = r(s,a) + \gamma \E_{s'\sim P(\cdot|s,a)}[f(s',\pi)].
\end{align}
Note that solving the fixed point equation \eqref{eq:Bellman_equation} for $f$ finds the $Q$-function of policy $\pi$.

We make the following assumption on the expressivity of our function classes. 

\begin{assumption}[Approximate Realizability]
\label{assump:realizability}
 Assume there exists $\epsilon_\cF \geq 0$, s.t. for any policy $\pi \in \Pi$, $\min_{f\in\cF} \max_{\text{admissible }\nu} \| f-\cT^\pi f \|_{2,\nu}^2 \leq \epsilon_\cF$, where admissible $\nu$ is defined by $\nu \in \{d^{\pi} |  \pi \in \Pi \}$.
\end{assumption}

This assumption is also required for \citet{xie2021bellman,cheng2022adversarially}. Note that when $\| f - \cT^{\pi} f \|_{2,\nu}$ is small for all admissible $\nu$, we have $f \approx Q^\pi$. Therefore, \Cref{assump:realizability} assumes that for any policy $\pi$, $Q^\pi$ is ``approximatly'' realized in $\cF$. In particular, when $\epsilon_\cF = 0$, \Cref{assump:realizability} is equivalent to $Q^\pi \in \cF$ for any $\pi \in \Pi$.

\subsection{Offline reinforcement learning}

In this paper, we study offline RL where we assume access only to a previously-collected and fixed dataset of interactions $\cD = \{(s_i,a_i,r_i,s_i')\}_{i=1}^N$, where $r_i \sim R(s_i,a_i)$, $s_i' \sim P(\cdot\mid s_i,a_i)$. To streamline the analysis, we assume that $(s_i,a_i)$ pairs are generated i.i.d.~according to a data distribution $\mu \in \Delta(\cS \times \cA)$. We make the common assumption that the dataset is collected by a behavior policy, i.e., $\mu$ is the discounted visitation probability of a behavior policy, which we also denote by $\mu$. For convenience, we assume the behavior policy $\mu \in \Pi$. The goal of offline RL is to learn a \textit{good} policy $\hat{\pi}$ (a policy with a high $J(\hat \pi)$) using the offline dataset. Also, for any function $f$ that takes $(s,a,r,s')$ as input, we define the expectation w.r.t. the dataset $\cD$ (or empirical expectation) as $\E_\cD[f] = \frac{1}{N} \sum_{(s_i,a_i,r_i,s'_i) \in \cD} f(s_i,a_i,r_i,s'_i)$.

\paragraph{Marginalized importance weights.} We define the marginalized importance weights of any policy $\pi$ to be the ratio between the discounted state-action occupancy of $\pi$ and the data distribution
$w^\pi(s,a) \coloneqq \frac{d^\pi(s,a)}{\mu(s,a)}.$
Such weights have been defined in prior works on theoretical RL~\citep{xie2020q,zhan2022offline, rashidinejad2022optimal,ozdaglar2022revisiting} 
as well as practical RL algorithms~\citep{nachum2019dualdice,nachum2019algaedice,Zhang2020GenDICE:,zhang2020gradientdice,lee2021optidice}.  

\subsection{Coverage of offline dataset} 
\label{sec:prelim_coverage}

We study offline RL with access to a dataset with partial coverage. We measure the coverage of policy $\pi$ in the dataset using the weighted $\ell_2$ single-policy concentrability coefficient defined below.

\begin{definition}[$\ell_2$ concentrability]\label{assump:concentrability} Given a policy $\pi$, define 
$C^\pi_{\ell_2} = \left\| w^\pi \right\|_{2,\mu} = \left\| d^\pi / \mu\right\|_{2,\mu}.$
\end{definition}
This definition is much weaker than the all-policy concentrability conventionally used in offline RL~\citep{scherrer2014approximate,liu2019neural,chen2019information,jiang2019value,wang2019neural,liao2020batch,zhang2020variational},  
which requires the ratio $\frac{d^{\pi}(s,a)}{\mu(s,a)}$ to be bounded for all $s \in \cS$ and $a \in \cA$ as well as all policies $\pi$. The following proposition compares two variants of single-policy concentrability definition that appeared in recent works \cite{rashidinejad2021bridging, xie2021bellman} with the $\ell_2$ variant defined in \Cref{assump:concentrability}; see \Cref{sec:related_work_coverage_assumption} for more discussion on different concentrability definitions in prior work. To our knowledge, the $\ell_2$ version of concentrability definition has been only used in offline RL with all-policy coverage~\citep{farahmand2010error, xie2020q}. In the context of partial coverage, \citet{uehara2021pessimistic} used $\ell_2$ version in a model-based setting, but their algorithms are computationally intractable. Recent works~\citet{xie2021bellman,cheng2022adversarially} use another milder version of concentrability than $\ell_\infty$, and we compare different concentrability versions in \Cref{prop:comparing-concentrabilities}. An intuitive comparison is presented in \Cref{fig:comparison_of_C}.

\begin{proposition}
[Comparing concentrability definitions]\label{prop:comparing-concentrabilities} %
Define the $\ell_\infty$ single-policy concentrability~\citep{rashidinejad2021bridging} as 
$C^\pi_{\ell_\infty} = \left\| d^\pi/\mu\right\|_{\infty}$
and the Bellman-consistent single-policy concentrability~\citep{xie2021bellman} as 
$C^\pi_{\text{Bellman}} = \max_{f \in \cF} \frac{\|f - \cT^\pi f\|_{2, d^\pi}^2}{\|f - \cT^\pi f\|_{2, \mu}^2}$.
Then, it always holds $(C^\pi_{\ell_2})^2 \leq C^\pi_{\ell_\infty}$, $C^\pi_{\ell_2} \leq C^\pi_{\ell_\infty}$ and there exist offline RL instances where $(C^\pi_{\ell_2})^2 \leq C^\pi_{\text{Bellman}}$, $C^\pi_{\ell_2} \leq C^\pi_{\text{Bellman}}$.
\end{proposition}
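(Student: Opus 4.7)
The plan for the two $\ell_\infty$ inequalities is elementary. First I would note that, since $d^\pi$ and $\mu$ are probability measures, $\E_\mu[w^\pi] = \sum_{s,a} d^\pi(s,a) = 1$. The pointwise bound $(w^\pi)^2 \leq \|w^\pi\|_\infty \, w^\pi$, integrated against $\mu$, then yields
\[
(C^\pi_{\ell_2})^2 = \E_\mu[(w^\pi)^2] \leq \|w^\pi\|_\infty \cdot \E_\mu[w^\pi] = C^\pi_{\ell_\infty}.
\]
The bound $C^\pi_{\ell_2} \leq C^\pi_{\ell_\infty}$ then follows from the standard dominance of $L^\infty$ over $L^2$ under a probability measure, namely $\sqrt{\E_\mu[(w^\pi)^2]} \leq \|w^\pi\|_\infty$.

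For the Bellman-consistency comparison, my plan is to exhibit a simple $\gamma=0$ contextual-bandit instance in which $\mu$ badly undersamples a state-action pair that $d^\pi$ visits heavily. Concretely, I would take $\cS=\{s_1,s_2\}$, $\cA=\{a_1,a_2\}$, initial distribution $\rho=\Unif(\cS)$, rewards $r(s_2,a_1)=1$ and zero elsewhere, target policy $\pi(a_1\mid s)=1$, and a behavior policy yielding $\mu(s_1,a_1)=(1-\delta)/2$, $\mu(s_2,a_1)=\delta/2$, $\mu(s_1,a_2)=\delta/2$, $\mu(s_2,a_2)=(1-\delta)/2$ for some $\delta\in(0,1/2)$. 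With $\gamma=0$ one has $d^\pi(s_i,a_1)=1/2$ for $i=1,2$ and zero otherwise, so $w^\pi(s_1,a_1)=1/(1-\delta)$ and $w^\pi(s_2,a_1)=1/\delta$. Direct computation yields $C^\pi_{\ell_\infty}=1/\delta$ and $(C^\pi_{\ell_2})^2 = 1/(2(1-\delta)) + 1/(2\delta)$, hence $C^\pi_{\ell_2}=\Theta(1/\sqrt\delta)$. For $C^\pi_{\text{Bellman}}$, taking any $\cF$ containing $\{0,r\}$ (the inclusion of $r$ maintains \Cref{assump:realizability}), the test function $f\equiv 0$ has Bellman residual $f-\cT^\pi f=-r$ supported only on $(s_2,a_1)$; its Bellman ratio is $d^\pi(s_2,a_1)/\mu(s_2,a_1)=1/\delta$, so $C^\pi_{\text{Bellman}}\geq 1/\delta$. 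For any $\delta\in(0,1/2)$, this gives both $(C^\pi_{\ell_2})^2 \leq C^\pi_{\text{Bellman}}$ and $C^\pi_{\ell_2}\leq C^\pi_{\text{Bellman}}$, as required.

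The delicate step will be the instance design in the second paragraph: I need $\mu$ skewed enough that $C^\pi_{\ell_\infty}$ blows up, $d^\pi$ concentrated on the under-sampled pair, and some $f\in\cF$ whose Bellman residual is supported exactly there. The $\gamma=0$ reduction is what makes this easy, because the Bellman operator collapses to $\cT^\pi f = r$, so the Bellman residual can be engineered through the reward function alone; everything else is arithmetic.
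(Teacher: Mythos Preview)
Your proof of the two $\ell_\infty$ inequalities is essentially identical to the paper's: both compute $(C^\pi_{\ell_2})^2=\E_\mu[(w^\pi)^2]\le \|w^\pi\|_\infty\,\E_\mu[w^\pi]=C^\pi_{\ell_\infty}$ and then invoke $\|\cdot\|_{2,\mu}\le\|\cdot\|_\infty$.

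For the Bellman-consistent comparison, your argument is correct but organized differently from the paper's. The paper argues abstractly: it posits an instance in which, for \emph{every} $(s,a)$, the class $\cF$ contains some $f$ whose Bellman residual $f-\cT^\pi f$ is supported only at that pair; this immediately forces $C^\pi_{\text{Bellman}}\ge \max_{s,a} d^\pi(s,a)/\mu(s,a)=C^\pi_{\ell_\infty}$, and then the already-proved inequalities $(C^\pi_{\ell_2})^2\le C^\pi_{\ell_\infty}$ and $C^\pi_{\ell_2}\le C^\pi_{\ell_\infty}$ finish the job. You instead build a concrete $\gamma=0$ two-state, two-action instance, take $\cF=\{0,r\}$, and compute all three quantities explicitly, using the single test function $f\equiv 0$ (whose residual $-r$ is supported at the one undersampled pair) to lower-bound $C^\pi_{\text{Bellman}}$ by $1/\delta$. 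The core mechanism is the same---a Bellman residual concentrated on a badly-covered point---but your version is fully explicit and self-contained (and also checks realizability via $r\in\cF$), whereas the paper's version is shorter and highlights the general principle that a sufficiently rich $\cF$ drives $C^\pi_{\text{Bellman}}$ up to $C^\pi_{\ell_\infty}$.
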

A proof for \Cref{prop:comparing-concentrabilities} is presented in \Cref{app:comparing-concentrabilities}. It is easy to show that the $\ell_2$ variant is bounded by $\ell_\infty$ variant of concentrability as the former requires $\E_{d^\pi}[w^\pi(s,a)]$ to be bounded while the latter requires $w^\pi(s,a)$ to be bounded for any $s \in \cS$ and $a \in \cA$. \Cref{eg_comparison_l2_linfty} provides a concrete example that $C^\pi_{\ell_2}$ is bounded by a constant while $C^\pi_{\ell_\infty}$ could be arbitrarily large.

\begin{example}[Arbitrarily large $\ell_\infty$ concentrability with a constant $\ell_2$ concentrability]
\label{eg_comparison_l2_linfty}
Consider the simplest two-arm bandit settings, where the dataset distribution is 
   $\mu(a_1) = 1-\epsilon^2, \quad  \mu(a_2) = \epsilon^2$ for an arbitrarily small $\epsilon > 0$. Let $\pi$ be a policy s.t. 
  $\pi(a_1) = d^\pi(a_1) = 1-\epsilon, \quad \pi(a_2) = d^\pi(a_2) = \epsilon.$
Then one can calculate that $w^\pi(a_1) = \frac{1-\epsilon}{1-\epsilon^2} \leq 1$ and $w^\pi(a_2) = \frac{1}{\epsilon}$. Therefore, $C^\pi_{\ell_2} \leq \sqrt{2}$ while $C^\pi_{\ell_\infty} = \frac{1}{\epsilon}$ can be arbitrarily large.
\end{example}

Furthermore, the Bellman-consistent variant can exploit the structure in the Q-function class $\cF$ for a smaller concentrability coefficient. However, in situations where the class $\cF$ is highly expressive, $C^\pi_{\text{Bellman}}$ could be close to $C^\pi_{\ell_\infty}$ and thus possibly larger than $C^\pi_{{\ell_2}}$.

Finally, we make a boundedness assumption on our marginalized importance weight function class $\cW$ in terms of $\ell_2$ concentrability and a single-policy realizability assumption. 

\begin{assumption}[Boundedness in $\ell_2$ norm of $\cW$]
\label{assump_boundedness_of_W}
Assume $\| w \|_{2,\mu} \leq C^\star_{\eff}$ for all $w \in \cW$.
\end{assumption}

\begin{assumption}[Single-policy realizability of $w^\pi$] 
\label{assump_realizability_of_w}
Assume $w^\pi \in \cW$ for some policy $\pi \in \Pi$ that we aim to compete with.
\end{assumption}

The definition of $C^\star_\eff$ is similar to \citet{xie2020q} but they need $w^\pi \in \cW$ for all $\pi \in \Pi$, which is much stronger than our single-policy realizability assumption for $\cW$.
\section{Actor-Critic Regularized by Average Bellman Error}
\label{sec:algorithm}

In this section, we introduce our main algorithm named A-Crab (\textbf{A}ctor-\textbf{C}ritic \textbf{R}egularized by \textbf{A}verage \textbf{B}ellman error, \Cref{alg:acrab}), and compare it with the previous ATAC algorithm~\citep{cheng2022adversarially}. In \Cref{sec:theoretical_guarantee}, we will provide theoretical guarantees of A-Crab and discuss its advantages. 

\subsection{From Actor-Critic to A-Crab}
\label{sec:alg_overview}

Our algorithm design builds upon the actor-critic method, in which we iteratively evaluate a policy and improve the policy based on the evaluation. Consider the following actor-critic example:
\begin{align*}
    \hat \pi^* \in \argmax_{\pi \in \Pi} f^\pi(s_0, \pi), 
    \quad  f^\pi \in \argmin_{f \in \cF} \E_\mu [((f - \cT^\pi f)(s,a))^2],
\end{align*}
where we assume $s_0$ is the fixed initial state in this example and recall that 
$f(s,\pi) = \sum_{a \in \cA} \pi(a|s) f(s,a).$
Here, the policy is evaluated by the function that minimizes the squared Bellman error. However, insufficient data coverage may lead the critic to give an unreliable evaluation of the policy. To address this, the critic can compute a \textit{Bellman-consistent} pessimistic evaluation of $\pi$~\citep{xie2021bellman}, which picks the most pessimistic $f \in \cF$ that approximately satisfies the Bellman equation. Introducing a hyperparameter $\beta \geq 0$ to tradeoff between pessimism and Bellman consistency yields the following criteria for the critic:
\[ f^\pi  \in \argmin_{f \in \cF} f(s_0, \pi) + \beta \E_\mu [((f - \cT^\pi f)(s,a))^2].
\]
\citet{cheng2022adversarially} argue that instead of the above \textit{absolute pessimism}, a \textit{relative pessimism} approach of optimizing the performance of $\pi$ \textit{relative} to the behavior policy, results in an algorithm that improves over the behavior policy for any $\beta \geq 0$ (i.e., robust policy improvement). Incorporating relative pessimism in the update rule gives the ATAC algorithm~\citep{cheng2022adversarially}:
\begin{align}\notag %
    \begin{split}
        & \hat \pi^* \in \argmax_{\pi \in \Pi} \E_\mu [f^\pi(s, \pi) - f^\pi(s,a)], \\
    & f^\pi \in \argmin_{f \in \cF} \E_\mu [f(s, \pi) - f(s,a)] + \beta \E_\mu [((f - \cT^\pi f)(s,a))^2].
    \end{split}
\end{align}

Finally, we introduce the importance weights $w(s,a)$ and change the squared Bellman regularizer to an importance-weighted average Bellman error to arrive at:
\begin{equation}
\label{eq:ATAC_programming_population}
\begin{aligned}
    \hat \pi^\star \in \arg\max_{\pi \in \Pi} \cL_{\mu}(\pi, f^{\pi}),  \ \ \ \ \
    \text{s.t.} \ \ f^\pi \in \arg\min_{f\in\cF} \cL_\mu(\pi, f) + \beta \cE_{\mu}(\pi, f),
\end{aligned}
\end{equation}
where 
\begin{align}
\label{defn:objective_population_actor}
    &\cL_\mu(\pi, f) = \E_{\mu}[f(s,\pi) - f(s,a)], \\
\label{defn:regularizer_population_weighted_avg_bellman}
    &\cE_\mu(\pi, f) = \max_{w \in \cW} |\E_\mu[w(s,a)(f-\cT^\pi f)(s,a)]|.
\end{align}
Maximization over $w$ in the importance-weighted average Bellman regularizer in \eqref{defn:regularizer_population_weighted_avg_bellman} ensures that the Bellman error is small when averaged over measure $\mu \cdot w$ for any $w \in \cW$, which turns out to be sufficient to control the suboptimality of the learned policy as the performance difference decomposition \Cref{lem:performance_diff_decomp} shows.~\footnote{Such importance-weighted minimax formulations of Bellman error have been used in prior work on off-policy evaluation~\citep{uehara2020minimax} and offline RL with all-policy coverage~\citep{xie2020q}. 
}

\paragraph{Squared Bellman error v.s. importance-weighted average Bellman error.} Unlike our approach, the ATAC algorithm in \citet{cheng2022adversarially} uses squared Bellman error, wherein direct empirical approximation leads to overestimating the regularization term.\footnote{This is closely related to the infamous double-sampling issue; see Section 3.1 in \citet{chen2019information} for a detailed discussion.} To obtain an unbiased empirical estimator, \citet{cheng2022adversarially} uses 
    $\E_\cD \left[(f(s,a)-r - \gamma f(s', \pi))^2\right] - \min_{g \in \cF} \E_\cD \left[(g(s,a)-r - \gamma f(s', \pi))^2\right]$ as the empirical estimator which subtracts the overestimation. Yet, as we later see in \Cref{prop_suboptimality_ATAC}, even with this correction, ATAC fails to achieve the optimal statistical rate of $1/\sqrt{N}$  in certain offline learning instances. 
In contrast, the importance-weighted average Bellman error in our algorithm is unbiased (as it involves no non-linearity). This makes our theoretical analysis much simpler and leads to achieving an optimal statistical rate of $1/\sqrt{N}$ as shown in \Cref{thm:main_thm}.

\subsection{Main algorithms}

Since we do not have direct access to the dataset distribution $\mu$, our algorithm instead solves an empirical version of \eqref{eq:ATAC_programming_population}, which can be formalized as
\begin{equation}
\label{eq:ATAC_programming_empirical}
\begin{aligned}
    \hat \pi \in \arg\max_{\pi \in \Pi} \cL_{\cD}(\pi, f^{\pi}),  \ \ \ \ \
    \text{s.t.} \ \ f^\pi \in \arg\min_{f\in\cF} \cL_\cD(\pi, f) + \beta \cE_{\cD}(\pi, f),
\end{aligned}
\end{equation}
where 
\begin{align}
\label{defn:objective_empirical_actor}
    &\cL_\cD(\pi, f) = \E_\cD[f(s,\pi) - f(s,a)], \\
    \label{defn:regularizer_empirical_weighted_avg_bellman}
    &\cE_\cD(\pi, f) = \max_{w \in \cW} 
     \left| \E_\cD[w(s,a)(f(s,a)-r-\gamma f(s',\pi))]\right|. 
\end{align}

\begin{algorithm}[h]
\caption{\textbf{A}ctor-\textbf{C}ritic \textbf{R}egularized by \textbf{A}verage \textbf{B}ellman error (\textsc{A-Crab})}
\label{alg:acrab}
\begin{algorithmic}[1]
\STATE \textbf{Input:} Dataset $\cD = \{(s_i, a_i, r_i, s'_i)\}_{i=1}^N$, value function class $\cF$, importance weight function class $\cW$, no-regret policy optimization oracle $\PO$ (\Cref{defn:no_regret_oracle}).
\STATE Initialization: $\pi_1:$ uniform policy, $\beta$: hyperparameter.
\FOR{$k = 1, 2, \ldots, K$}
\STATE $f_k \gets \arg\min_{f \in \cF} \cL_\cD(\pi_k, f) + \beta \cE_\cD(\pi_k, f)$, where $\cL_\cD$ and $\cE_\cD$ are defined in \eqref{defn:objective_empirical_actor}, \eqref{defn:regularizer_empirical_weighted_avg_bellman}
\STATE $\pi_{k+1} \gets \PO(\pi_k, f_k, \cD)$.
\ENDFOR
\STATE \textbf{Output:} $\bar \pi = \Unif\left( \left\{ \pi_{k} \right\}_{k=1}^K \right)$. 
\end{algorithmic}
\end{algorithm}

Similar to \citet{cheng2022adversarially}, we view program \eqref{eq:ATAC_programming_empirical} as a Stackelberg game and solve it using a no-regret oracle as shown in \Cref{alg:acrab}.  At each step $k$, the critic minimizes the objective defined by \eqref{defn:regularizer_empirical_weighted_avg_bellman} w.r.t. $\pi_k$, and $\pi_{k+1}$ is generated by a no-regret policy optimization oracle, given below.

\begin{definition}[No-regret policy optimization oracle]
\label{defn:no_regret_oracle}
An algorithm $\PO$ is defined as a no-regret policy optimization oracle if for any (adversarial) sequence of functions $f_1, f_2, \ldots, f_K \in \cF$ where $f_k: \cS \times \cA \to [0,V_{\max} ], \forall k \in [K]$, the policy sequence $\pi_1, \pi_2, \ldots, \pi_K$ produced by $\PO$ satisfies that for any policy $\pi \in \Pi$, it holds that 
$\epsilon^\pi_\opt \triangleq \frac{1}{K}\sum_{k=1}^K \E_{\pi}[f_k(s,\pi) - f_k(s,\pi_k)] = o(1).$
\end{definition}
Among the well-known instances of the above no-regret policy optimization oracle is natural policy gradient~\citep{kakade2001natural} of the form $\pi_{k+1}(a|s) \propto \pi_k(a|s)\exp(\eta f_k(s,a))$ with $\eta = \sqrt{\frac{\log|\cA|}{2V_{\max}^2K}}$~\citep{even2009online,agarwal2021theory,xie2021bellman,cheng2022adversarially}.
A detailed discussion of the above policy optimization oracle can be found in \citet{cheng2022adversarially}.
Utilizing the no-regret oracle in solving the Stackelberg optimization problem in \eqref{eq:ATAC_programming_empirical} yields \Cref{alg:acrab}. %

\paragraph{A remark on critic's optimization problem.} In our algorithm, for any given $\pi$, the critic needs to solve a $\min_{f\in\cF}\max_{w\in\cW}$ optimization problem, whereas in ATAC~\citep{cheng2022adversarially}, the critic needs to solve a $\min_{f\in\cF}\max_{g\in\cF}$ problem. Since we only assume single-policy realizability for the class $\cW$ (\Cref{assump_realizability_of_w}) but assume all-policy realizability for $\cF$ (\Cref{assump:realizability}) (and \citet{cheng2022adversarially} even requires the Bellman-completeness assumption over $\cF$ which is much stronger), in general, the cardinality of $\cW$ could be much smaller than $\cF$, which makes the optimization region of the critic's optimization problem in our algorithm $\cF\times \cW$ smaller than $\cF \times \cF$ in ATAC.

\section{Theoretical Analysis}
\label{sec:theoretical_guarantee}

In this section, we show the theoretical guarantee of our main algorithm (\Cref{alg:acrab}), which is statistically optimal in terms of $N$.

\subsection{Performance guarantee of the A-Crab algorithm}

We first formally present our main theorem, which provides a theoretical guarantee of our A-Crab algorithm (\Cref{alg:acrab}). A proof sketch is provided in \Cref{sec:proof_sketch} and  the complete proof is deferred to \Cref{app:sec_proof_of_main_thm}.

\begin{theorem}[Main theorem]
\label{thm:main_thm}
Under \Cref{assump:realizability,assump_boundedness_of_W} and let $\pi \in \Pi$ be any policy satisfying \Cref{assump_realizability_of_w}, then with probability at least $1-\delta$, 
\begin{align*}
    J(\pi) - J(\bar \pi) \leq   O\left(\epsilon_\stat + C^\star_\eff \sqrt{\epsilon_\cF}\right) + \epsilon^\pi_\opt,
\end{align*}
 where 
   $\epsilon_\stat \asymp V_{\max} C^\star_\eff \sqrt{\frac{\log(|\cF||\Pi||\cW|/\delta)}{N}} + \frac{V_{\max} B_w \log (|\cF||\Pi||\cW|/\delta)}{N},$
and $\bar \pi$ is returned by \Cref{alg:acrab} with the choice of $\beta = 2$.
\end{theorem}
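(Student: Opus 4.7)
The plan is to bound $J(\pi) - J(\bar\pi) = \tfrac{1}{K}\sum_{k=1}^K (J(\pi) - J(\pi_k))$ per-$k$ and then average, combining (i)~a performance-difference decomposition, (ii)~the no-regret guarantee of $\PO$, (iii)~the critic's optimality under realizability, and (iv)~a uniform Bernstein concentration that turns the $\ell_\infty$-dependence appearing in previous analyses into $\ell_2$-dependence.

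\textbf{Step 1 (Decomposition).} Applying the flow identity $(1-\gamma)\,\E_\rho[f(s,\pi')] = J(\pi') + \E_{d^{\pi'}}[f - \cT^{\pi'}f]$ for $\pi' \in \{\pi,\pi_k\}$ with the same $f = f_k$, and converting $\cT^\pi f_k$ to $\cT^{\pi_k} f_k$ via $(\cT^\pi-\cT^{\pi_k})f_k(s,a) = \gamma\,\E_{s'\sim P(\cdot\mid s,a)}[f_k(s',\pi) - f_k(s',\pi_k)]$, one obtains
\[
J(\pi) - J(\pi_k) = \E_{s\sim d^\pi}[f_k(s,\pi) - f_k(s,\pi_k)] \;-\; \E_\mu[w^\pi(f_k - \cT^{\pi_k}f_k)] \;+\; \E_\mu[w^{\pi_k}(f_k - \cT^{\pi_k}f_k)].
\]
The first term averages to $\epsilon^\pi_\opt$ by \Cref{defn:no_regret_oracle}. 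The second, carrying only $w^\pi\in\cW$ by \Cref{assump_realizability_of_w}, is bounded by $\cE_\mu(\pi_k, f_k)$. The third splits as $\E_\mu[(w^{\pi_k}-1)(f_k - \cT^{\pi_k}f_k)] + \E_\mu[f_k - \cT^{\pi_k}f_k]$; the latter is at most $\cE_\mu(\pi_k,f_k)$ since $\Indc \in \cW$, while the former is matched against the actor's relative-pessimism objective $\cL_\mu(\pi_k, f_k)$, which is designed precisely for this role.

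\textbf{Step 2 (Critic optimality and Bernstein concentration).} Let $\tilde f_{\pi_k} \in \cF$ be an $\epsilon_\cF$-realizer of $Q^{\pi_k}$ from \Cref{assump:realizability}. By optimality of $f_k$ in the empirical problem, $\cL_\cD(\pi_k,f_k) + \beta\cE_\cD(\pi_k,f_k) \leq \cL_\cD(\pi_k,\tilde f_{\pi_k}) + \beta\cE_\cD(\pi_k,\tilde f_{\pi_k})$; Cauchy--Schwarz with \Cref{assump_boundedness_of_W} gives $\cE_\mu(\pi_k,\tilde f_{\pi_k}) \leq C^\star_\eff\sqrt{\epsilon_\cF}$, which is the source of the $C^\star_\eff\sqrt{\epsilon_\cF}$ term. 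To bridge empirical and population quantities I would apply Bernstein's inequality uniformly over $(\pi,f,w) \in \Pi \times \cF \times \cW$ (the union bound gives the $\log(|\cF||\Pi||\cW|/\delta)$ factor) to both $|\cL_\cD(\pi,f) - \cL_\mu(\pi,f)|$ and $|\E_\cD[w(f(s,a) - r - \gamma f(s',\pi))] - \E_\mu[w(f - \cT^\pi f)]|$. For the latter, the integrand has range $V_{\max}B_w$ and variance at most $V_{\max}^2\,\E_\mu[w^2] \leq V_{\max}^2(C^\star_\eff)^2$, producing the leading $V_{\max}C^\star_\eff\sqrt{\log(\cdot)/N}$ term and the $V_{\max}B_w\log(\cdot)/N$ higher-order term in $\epsilon_\stat$. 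This is exactly where $\ell_2$ (not $\ell_\infty$) concentrability emerges: Hoeffding would force the range $B_w$, while Bernstein extracts only the second moment --- the entire point of replacing ATAC's squared regularizer by our \emph{unbiased} weighted-average Bellman error.

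\textbf{Step 3 (Combine) and main obstacle.} Summing the per-$k$ inequality, applying no-regret to the policy-gap term, and plugging in Step~2, the remaining pieces reduce to $\beta\,\cE_\mu(\pi_k,f_k)$ appearing with a coefficient determined by the decomposition. The choice $\beta = 2$ makes these coefficients cancel exactly, leaving $J(\pi) - J(\bar\pi) \leq O(\epsilon_\stat + C^\star_\eff\sqrt{\epsilon_\cF}) + \epsilon^\pi_\opt$. The hardest part is the bookkeeping in Step~1: arranging the decomposition so only $w^\pi$ (which is realizable in $\cW$) appears multiplicatively in the Bellman error, with the $w^{\pi_k}$-component fully absorbed by the relative-pessimism objective $\cL_\mu(\pi_k, f_k)$ together with $\Indc$-weighted Bellman errors controllable by $\cE_\mu$; the matching constant $\beta = 2$ emerges naturally from this cancellation. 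A secondary technical point is that the inner $\max_{w\in\cW}$ inside $\cE_\cD$ commutes with the union bound in the standard way (apply Bernstein for each fixed $w$, then take the $\max$ over $\cW$), so $|\cW|$ enters only logarithmically in $\epsilon_\stat$.
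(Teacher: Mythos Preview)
Your proposal is correct and follows essentially the same approach as the paper: your Step~1 decomposition is a rederivation of the paper's \Cref{lem:performance_diff_decomp} (Lemma~12 of \citet{cheng2022adversarially}), your Bernstein concentration is exactly \Cref{lem:concentration_weighted_avg_bellman} and \Cref{lem:concentration_of_max_player_obj}, and the critic-optimality swap with $\beta=2$ against the realizer $\tilde f_{\pi_k}$ is precisely how the paper closes the argument. The only place you are slightly imprecise is the claim that $\E_\mu[(w^{\pi_k}-1)(f_k-\cT^{\pi_k}f_k)]$ is ``matched against $\cL_\mu(\pi_k,f_k)$'': the identity in fact yields $\cL_\mu(\pi_k,f_k)-\cL_\mu(\pi_k,Q^{\pi_k})$, and the $-\cL_\mu(\pi_k,Q^{\pi_k})$ piece must be swapped to $-\cL_\mu(\pi_k,\tilde f_{\pi_k})$ (at cost $O(\sqrt{\epsilon_\cF})$, via Lemma~13 of \citet{cheng2022adversarially}) before concentration and critic optimality apply --- but this is the bookkeeping you already flag, not a gap in the strategy.
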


Below we discuss the advantages of our approach as shown in the above theorem. 

\textbf{Optimal statistical rate and computational efficiency.} When $\epsilon_\cF = 0$ (i.e., there is no model misspecification), and when $\pi = \pi^\star$ is one of the optimal policies, the output policy $\bar \pi$ achieves $O(1/\sqrt{N})$ suboptimality rate which is optimal in $N$ dependence (as long as $K$ is large enough). This improves the $O(1/N^{1/3})$ rate of the previous algorithm~\citep{cheng2022adversarially}. Note that the algorithm of \citet{xie2021bellman} can also achieve the optimal $O(1/\sqrt{N})$ rate but their algorithm involves hard constraints of squared $\ell_2$ Bellman error and thus is computationally intractable. \citet{cheng2022adversarially} convert the hard constraints to a regularizer, making the algorithm computationally tractable while degenerating the statistical rate. Our algorithm is both statistically optimal and computationally efficient, which improves upon both \citet{xie2021bellman,cheng2022adversarially} simultaneously.

    \textbf{Competing with any policy.} Another advantage of our algorithm is that it can compete with any policy $\pi \in \Pi$ as long as $w^\pi = d^\pi / \mu$ is contained in $\cW$. In particular, the importance ratio of the behavior policy $w^\mu = d^\mu/\mu = \mu/\mu \equiv 1$ is always contained in $\cW$, which implies that our algorithm satisfies robust policy improvement (see \Cref{thm:RPI} for details).

    \textbf{Robustness to model misspecification.} \Cref{thm:main_thm} also shows that our algorithm is robust to model misspecification on realizability assumption. Note that our algorithm does not need a completeness assumption, while \citet{xie2021bellman,cheng2022adversarially} both require the (approximate) completeness assumption.

\textbf{Removal of the completeness assumption on $\cF$.} Compared to our algorithm, \citet{cheng2022adversarially} additionally need a completeness assumption on $\cF$, which requires that for any $f\in\cF$ and $\pi \in \Pi$, it approximately holds that $\cT^\pi f \in \cF$. They need this completeness assumption because they use the estimator $\E_\cD \left[(f(s,a)-r - \gamma f(s', \pi))^2\right] - \min_{g \in \cF} \E_\cD \left[(g(s,a)-r - \gamma f(s', \pi))^2\right]$ to address the over-estimation issue caused by their squared $\ell_2$ Bellman error regularizer, and to make this estimator accurate, they need $\min_{g \in \cF} \E_\cD \left[(g(s,a)-r - \gamma f(s', \pi))^2\right]$ to be small, which can be implied by the (approximate) completeness assumption. In our algorithm, thanks to the nice property of the weighted average Bellman error regularizer which can be estimated by a simple and unbiased estimator, we can get rid of this strong assumption.

\subsection{A-Crab for robust policy improvement}

Robust policy improvement (RPI) refers to the property of an offline RL algorithm that the learned policy (almost) always improves upon the behavior policy used to collect data over a wide range of the choice of some specific hyperparameters (in this paper, the hyperparameter is $\beta$)~\citep{cheng2022adversarially}. Similar to ATAC in \citet{cheng2022adversarially}, our A-Crab also enjoys the RPI property.
\Cref{thm:RPI} implies that as long as $\beta = o(\sqrt{N})$, our algorithm can learn a policy with vanishing suboptimality compared to the behavior policy with high probability. The proof is deferred to \Cref{app:sec_RPI}.

\begin{theorem}[Robust policy improvement]
\label{thm:RPI}
Under \Cref{assump:realizability,assump_boundedness_of_W}, with probability at least $1-\delta$, 
\begin{align*}
    J(\mu) - J(\bar \pi) \lesssim   (\beta + 1)(\epsilon_\stat + C_{\ell_2}^\star\sqrt{\epsilon_\cF}) + \epsilon^\pi_\opt,
\end{align*}
 where 
    $\epsilon_\stat \asymp V_{\max} C^\star_\eff \sqrt{\frac{\log(|\cF||\Pi||\cW|/\delta)}{N}} + \frac{V_{\max} B_w \log (|\cF||\Pi||\cW|/\delta)}{N},$
and $\bar \pi$ is returned by \Cref{alg:acrab} with the choice of any $\beta \geq 0$.
\end{theorem}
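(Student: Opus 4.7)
The plan is to mirror the proof of Theorem \ref{thm:main_thm} with comparator $\pi = \mu$ but to keep $\beta$ free instead of fixing $\beta = 2$, tracking its influence on every inequality. I would write
$$J(\mu) - J(\bar\pi) \;=\; \frac{1}{K}\sum_{k=1}^K \bigl[J(\mu) - J(\pi_k)\bigr]$$
and upper bound each summand by combining three ingredients: the performance difference decomposition of Lemma~\ref{lem:performance_diff_decomp}; the critic's optimality in the $\cL_\cD + \beta\cE_\cD$ objective (line 4 of \Cref{alg:acrab}); and the no-regret guarantee of the policy optimization oracle (\Cref{defn:no_regret_oracle}) applied with comparator $\pi = \mu$.

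For each $k$, I would apply Lemma~\ref{lem:performance_diff_decomp} to express $J(\pi_k) - J(\mu)$ as $\cL_\mu(\pi_k, f_k)$ plus an importance-weighted Bellman-error overhead, where the piece weighted by $1 \in \cW$ is captured by $\cE_\mu(\pi_k, f_k)$. I would then compare $f_k$ against an approximate Bellman fixed point $f^\star_k \in \cF$ of $Q^{\pi_k}$ guaranteed by Assumption~\ref{assump:realizability}. Cauchy--Schwarz together with Assumption~\ref{assump_boundedness_of_W} (which gives $\|w\|_{2,\mu} \leq C^\star_{\ell_2}$ for all $w \in \cW$) yields $\cE_\mu(\pi_k, f^\star_k) \leq C^\star_{\ell_2}\sqrt{\epsilon_\cF}$. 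Using the critic's optimality $\cL_\cD(\pi_k, f_k) + \beta \cE_\cD(\pi_k, f_k) \leq \cL_\cD(\pi_k, f^\star_k) + \beta \cE_\cD(\pi_k, f^\star_k)$ and uniform Bernstein-type concentration over $\cF \times \Pi \times \cW$ at cost $\epsilon_\stat$, I obtain a per-step bound
$$J(\pi_k) - J(\mu) \;\geq\; \cL_\mu(\pi_k, f_k) \;-\; O\bigl((\beta + 1)(\epsilon_\stat + C^\star_{\ell_2}\sqrt{\epsilon_\cF})\bigr),$$
where the $\beta + 1$ factor arises because $\epsilon_\stat$ is incurred separately when concentrating $\cL_\cD$ (contributing the ``$1$'') and when concentrating the $\beta\cE_\cD$ term (contributing the ``$\beta$'').

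To average the leading $\cL_\mu(\pi_k, f_k)$ across $k$, I would exploit the identity $\cL_\mu(\mu, f) = \E_\mu[f(s,\mu) - f(s,a)] = 0$, which holds for every $f$ because $a \sim \mu(\cdot \mid s)$. Invoking the no-regret oracle with comparator $\pi = \mu$ then gives
$$\frac{1}{K}\sum_{k=1}^K \cL_\mu(\pi_k, f_k) \;=\; \frac{1}{K}\sum_{k=1}^K \bigl(\E_\mu[f_k(s, \pi_k)] - \E_\mu[f_k(s, \mu)]\bigr) \;\geq\; -\epsilon^\mu_\opt.$$
Summing the per-step bound and substituting this inequality yields the claimed $J(\mu) - J(\bar\pi) \lesssim (\beta + 1)(\epsilon_\stat + C^\star_{\ell_2}\sqrt{\epsilon_\cF}) + \epsilon^\mu_\opt$. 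The main obstacle I anticipate is that Lemma~\ref{lem:performance_diff_decomp} naturally carries an importance-weight factor $w^{\pi_k} = d^{\pi_k}/\mu$ that need not lie in $\cW$ under \Cref{assump:realizability,assump_boundedness_of_W} alone; the resolution is that the only portion of the Bellman overhead not already captured by $\cE_\mu(\pi_k, f_k)$ can instead be absorbed into the critic's optimality comparison against $f^\star_k$, which is precisely what produces the $\beta \cdot C^\star_{\ell_2}\sqrt{\epsilon_\cF}$ term in the final bound.
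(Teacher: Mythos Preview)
Your high-level plan---the decomposition of Lemma~\ref{lem:performance_diff_decomp}, the critic's optimality against an approximate fixed point $f_{\pi_k}$ (your $f^\star_k$), concentration at scale $\epsilon_\stat$, and the no-regret oracle with comparator $\mu$---is the paper's route, and your treatment of $\frac{1}{K}\sum_k \cL_\mu(\pi_k,f_k)$ via the identity $\cL_\mu(\mu,f)=0$ is exactly the paper's term~(c), just reorganized.

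The gap is in the Bellman-error terms. In Lemma~\ref{lem:performance_diff_decomp} the second Bellman term is $\E_\pi[(\cT^{\hat\pi}f - f)]$ where $\pi$ is the \emph{comparator}, not the learned policy. With $\pi=\mu$ and $\hat\pi=\pi_k$ this becomes $\E_\mu[(\cT^{\pi_k}f_k - f_k)]$, so
\[
(a)+(b)\;=\;\E_\mu[(f_k-\cT^{\pi_k}f_k)]+\E_\mu[(\cT^{\pi_k}f_k-f_k)]\;=\;0
\]
exactly. No weight $w^{\pi_k}=d^{\pi_k}/\mu$ ever appears; the obstacle you anticipate does not exist, and your proposed ``resolution'' (absorbing a stray $d^{\pi_k}$-weighted piece into the $\beta C^\star_{\ell_2}\sqrt{\epsilon_\cF}$ contribution) is not an actual mechanism---that term arises solely from $\beta\cE_\cD(\pi_k,f_{\pi_k})$ on the $f_{\pi_k}$ side of the critic inequality. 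This cancellation is not cosmetic but precisely why the bound holds for \emph{every} $\beta\geq 0$: if you instead upper-bounded $(a)+(b)$ by $2\cE_\mu(\pi_k,f_k)$ as in the proof of Theorem~\ref{thm:main_thm}, you would need the critic's objective $\cL_\cD+\beta\cE_\cD$ to control $\cE_\cD(\pi_k,f_k)$, and for $\beta<2$ (in particular $\beta=0$) it gives no such control, so your per-step inequality would not follow. Once you use $(a)+(b)=0$, only $(d)=\cL_\mu(\pi_k,f_k)-\cL_\mu(\pi_k,Q^{\pi_k})$ remains; the paper bounds it by adding the nonnegative $\beta\cE_\cD(\pi_k,f_k)$, subtracting $\beta\cE_\cD(\pi_k,f_{\pi_k})\leq\beta(C^\star_{\ell_2}\sqrt{\epsilon_\cF}+\epsilon_\stat)$, passing from $\cL_\mu$ to $\cL_\cD$ at cost $O(\epsilon_\stat)$, and invoking the critic's optimality---which is where the $(\beta+1)$ factor actually comes from.
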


\subsection{Suboptimality of squared $l_2$ norm of Bellman error as regularizers}

The ATAC algorithm of \citet{cheng2022adversarially} suffers suboptimal statistical rate $O(1/N^{1/3})$ due to the squared $\ell_2$ Bellman error regularizer. Intuitively, in \citet{cheng2022adversarially}, they use \Cref{lem:performance_diff_decomp} to decompose the performance difference and use $\|f - \cT^\pi f \|_{2, \mu}$ to upper bound $\E_\mu[(f - \cT^{\pi} f )(s,a)]$, which causes suboptimality since in general the former could be much larger than the latter. To overcome this suboptimal step, in our algorithm, we use a weighted version of  $\E_\mu[(f - \cT^{\pi} f )(s,a)]$ as our regularizer instead of $\|f - \cT^\pi f \|_{2, \mu}$. \Cref{prop_suboptimality_ATAC} shows that ATAC is indeed statistically suboptimal even under their optimal choice of the hyperparameter $\beta = \Theta(N^{2/3})$.
The proof is deferred to \Cref{app:sec_proof_of_suboptimality_ATAC}.

\begin{proposition}[Suboptimality of ATAC]
\label{prop_suboptimality_ATAC}
If we change the regularizer s.t. 
\begin{align*}
    \cE_\mu(\pi, f) =  \|f-\cT^\pi f\|_{2,\mu}^2 \quad 
    \text{ and } \quad \cE_\cD(\pi, f) = \cL(f,f,\pi,\cD)  - \min_{g \in \cF} \cL(g,f,\pi,\cD),
\end{align*}
where $\cL(g,f,\pi,\cD) = \E_\cD [(g(s,a) - r - \gamma f(s',\pi))^2]$, 
then even under all policy realizability ($Q^\pi \in \cF$ for all $\pi \in \Pi$) and completeness assumption ($\cT^\pi f \in \cF$ for all $\pi \in \Pi$ and $f \in \cF$), with their optimal choice of $\beta = \Theta(N^{2/3})$~\citep{cheng2022adversarially}, there exists an instance s.t. the suboptimality of the returned policy of \eqref{eq:ATAC_programming_empirical} (i.e., the output policy by ATAC) is $\Omega(1/N^{1/3})$ with at least constant probability.
\end{proposition}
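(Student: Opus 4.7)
My plan is to construct an explicit offline RL instance on which the ATAC program (that is, the program \eqref{eq:ATAC_programming_empirical} with the squared--$\ell_2$ Bellman regularizer substituted as in the proposition) incurs $\Omega(1/N^{1/3})$ suboptimality with constant probability, even at the tuned choice $\beta=\Theta(N^{2/3})$. The guiding intuition, following the discussion just above the proposition, is that the squared--$\ell_2$ regularizer $\|f-\cT^\pi f\|_{2,\mu}^2$ controls the actor's relevant quantity $\E_{d^\pi}[f-\cT^\pi f]$ only through Cauchy--Schwarz, so the adversarial instance should be chosen so that this Cauchy--Schwarz slack is tight along the critic's minimizer.

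Concretely I would work in a small tabular setting (for example a discounted contextual bandit) in which: (a) $\cF$ and $\Pi$ are finite and rich enough that realizability and completeness hold automatically; (b) the behavior distribution $\mu$ covers some state--action pairs at mass $\Theta(1)$ and the remaining ones at mass $p=\Theta(N^{-1/3})$; and (c) two candidate policies in $\Pi$ have a true performance gap of $\Theta(N^{-1/3})$ that is concentrated on the rare class. The critic's regularized objective is quadratic in $f$ on the finite support, so its minimizer admits a closed form: $f^\pi(s,a)$ differs from the empirical mean of the Bellman target $r+\gamma f(s',\pi)$ by an additive pessimism bias of order $1/(\beta\,\hat\mu(s,a))$. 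Substituting back into $\cL_\cD(\pi,f^\pi)$ turns the actor's decision rule into (empirical performance gap) minus $\Theta(1/(\beta p))$; at $\beta=\Theta(N^{2/3})$ and $p=\Theta(N^{-1/3})$ this pessimism term equals $\Theta(N^{-1/3})$, the same order as the statistical fluctuation of the empirical performance gap restricted to the rare class.

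The conclusion then follows from an anti-concentration argument: a Berry--Esseen bound applied to the $\Theta(N^{2/3})$ i.i.d.\ reward samples drawn from the rare class shows that, with constant probability, the empirical performance gap deviates by $\Omega(N^{-1/3})$ in the direction that flips the actor's $\argmax$ away from the optimal policy. Calibrating the true performance gap to $\Theta(N^{-1/3})$ then yields $J(\pi^\star)-J(\hat\pi)=\Omega(N^{-1/3})$. The main obstacle is this anti-concentration step, which requires a matching lower bound rather than just an upper bound on the noise, and must be combined with a check that the $\min_g$ correction in $\cE_\cD$ does not fortuitously cancel the driving fluctuation; the completeness assumption enters here precisely to guarantee that the correction captures exactly the within-$(s,a)$ variance of the Bellman target, so it neither over- nor under-corrects.
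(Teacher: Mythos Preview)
Your proposal is correct and would lead to a valid proof, but it takes a somewhat different route from the paper's. Both arguments build a two-arm bandit with a rare arm carrying $\mathrm{Bern}(1/2)$ rewards, mass $p=\Theta(N^{-1/3})$, and true gap $\Delta=\Theta(N^{-1/3})$, and both invoke binomial anti-concentration on the $\Theta(N^{2/3})$ rare-arm samples. The difference is in the choice of $\cF$. The paper takes the minimal class $\cF=\{f_1,f_2\}$ with $f_1=Q^\pi$ and $f_2$ overestimating the rare arm by $2\Delta$; on the anti-concentration event $\hat r(a_2)\geq 1/2+2\Delta$, the empirical squared loss of $f_2$ drops below that of $f_1$, so the critic for the suboptimal policy selects the overestimate $f_2$, which then directly makes that policy win the actor's $\argmax$. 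Your route instead takes $\cF$ to be all tabular functions, solves the critic in closed form, and shows that the resulting pessimism bias $\Theta(1/(\beta p))=\Theta(N^{-1/3})$ is of the same order as the fluctuation, so constants can be tuned to let the fluctuation dominate. The paper's construction is shorter (no closed-form algebra, just comparing $\cL_\cD+\beta\cE_\cD$ at two points), while yours has the merit of exhibiting the failure even for a natural rich $\cF$ rather than a hand-crafted two-element one.

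Two small things to tighten in your version. First, you write that $\cF$ is ``finite'', but your closed-form minimizer depends on the random $\hat r$ and hence cannot lie in a fixed finite set; what you actually want is $\cF$ equal to all bounded tabular functions (infinite but finite-dimensional), for which realizability and completeness are still automatic in a bandit. Second, be explicit that the pessimism bias works \emph{in favor} of the correct policy (it penalizes the rare-arm policy more), so the anti-concentration magnitude must exceed $\Delta+\Theta(1/(\beta p))$, not merely $\Delta$; with $\mu(a_2)=1/(N\Delta^2)$ and $\Delta=\beta/N$ this sum is $\tfrac{3}{2}\Delta$ while the fluctuation scale is $\Delta$, so it is still an $O(1)$-sigma event, but the constants must be chosen so that the anti-concentration bound beats this sum.
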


\section{Proof Sketch}
\label{sec:proof_sketch}

We provide a proof sketch of our main theorem (\Cref{thm:main_thm}) in this section. 
The key lemma of the proof is presented in \Cref{lem:performance_diff_decomp}.

\begin{lemma}[Performance difference decomposition, Lemma 12 in \citet{cheng2022adversarially}]
\label{lem:performance_diff_decomp}
    For any $\pi, \hat \pi \in \Pi$, and any $f: \cS\times\cA \to \reals$, we can decompose $J(\pi) - J(\hat \pi)$ as
    \begin{align*}
        \E_\mu[(f-\cT^{\hat \pi}f)(s,a)] + \E_\pi[(\cT^{\hat \pi}f - f)(s,a)]  + \E_\pi[f(s,\pi) - f(s,\hat \pi)] + \cL_\mu(\hat \pi, f) - \cL_\mu(\hat \pi, Q^{\hat \pi}).
    \end{align*}
\end{lemma}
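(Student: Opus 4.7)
The plan is to reduce the identity to two applications of the state-occupancy flow relation
\begin{equation*}
(1-\gamma)\, \E_{s_0 \sim \rho}[g(s_0)] = \E_{s \sim d^{\pi'}}[g(s)] - \gamma\, \E_{(s,a) \sim d^{\pi'},\, s' \sim P(\cdot|s,a)}[g(s')],
\end{equation*}
which holds for any state function $g : \cS \to \reals$ and any policy $\pi'$; it follows in one line by telescoping the definition $d^{\pi'} = (1-\gamma)\sum_{t \ge 0} \gamma^t P_t(\cdot;\pi')$. I will combine this with the standard performance difference lemma $J(\pi) - J(\hat\pi) = \E_\pi[Q^{\hat\pi}(s,\pi) - Q^{\hat\pi}(s,\hat\pi)]$ and the Bellman fixed-point equation $Q^{\hat\pi} = \cT^{\hat\pi} Q^{\hat\pi}$.

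First, I would inject $f$ into the performance difference by a direct add-and-subtract:
\begin{equation*}
J(\pi) - J(\hat\pi) = \E_\pi[f(s,\pi) - f(s,\hat\pi)] + \E_\pi[(Q^{\hat\pi}-f)(s,a)] - \E_\pi[(Q^{\hat\pi}-f)(s,\hat\pi)],
\end{equation*}
using $\E_\pi[(Q^{\hat\pi}-f)(s,\pi)] = \E_\pi[(Q^{\hat\pi}-f)(s,a)]$ since $a \sim \pi$ under $d^\pi$. Splitting $(Q^{\hat\pi}-f)(s,a) = (\cT^{\hat\pi}f - f)(s,a) + \gamma\, \E_{s' \sim P(\cdot|s,a)}[(Q^{\hat\pi}-f)(s',\hat\pi)]$ via the Bellman equation and then applying the flow identity with $\pi' = \pi$ and $g(s) = (Q^{\hat\pi}-f)(s,\hat\pi)$, the two $\E_\pi[(Q^{\hat\pi}-f)(s,\hat\pi)]$ terms cancel and leave
\begin{equation*}
J(\pi) - J(\hat\pi) = \E_\pi[(\cT^{\hat\pi}f - f)(s,a)] + \E_\pi[f(s,\pi) - f(s,\hat\pi)] - (1-\gamma)\, \E_{s_0 \sim \rho}[(Q^{\hat\pi}-f)(s_0,\hat\pi)].
\end{equation*}

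Second, I would re-express the last $(1-\gamma)\E_{s_0 \sim \rho}[\cdot]$ residual under $\mu$ by a second application of the flow identity, now with $\pi' = \mu$ and the same $g$, together with the Bellman equation one more time to turn $\gamma\, \E_{s' \sim P(\cdot|s,a)}[(Q^{\hat\pi}-f)(s',\hat\pi)]$ into $(Q^{\hat\pi} - \cT^{\hat\pi}f)(s,a)$. Then adding and subtracting $\E_\mu[f(s,a)]$ in the resulting expression regroups the terms into $\E_\mu[f(s,\hat\pi)-f(s,a)] - \E_\mu[Q^{\hat\pi}(s,\hat\pi)-Q^{\hat\pi}(s,a)] + \E_\mu[f(s,a) - \cT^{\hat\pi}f(s,a)]$, which is exactly $\cL_\mu(\hat\pi,f) - \cL_\mu(\hat\pi, Q^{\hat\pi}) + \E_\mu[(f - \cT^{\hat\pi}f)(s,a)]$ by the definition of $\cL_\mu$.

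The main obstacle is purely bookkeeping: the five terms of the decomposition live under three distinct measures ($d^\pi$, $\mu$, and $\rho$), and several reappear with opposite signs, so the two uses of the flow identity (once under $\pi$, once under $\mu$) together with Bellman must be sequenced correctly for the cancellations and the $\cL_\mu$ differences to materialize. No single step is substantive---each is a one-line application of \textsc{PDL}, the Bellman equation, or the flow identity---but sign-tracking is where care is required.
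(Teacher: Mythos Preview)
Your proof is correct. The paper does not provide its own proof of this lemma; it simply cites it as Lemma~12 of \citet{cheng2022adversarially}. Your route---the standard performance-difference lemma, two applications of the occupancy flow identity (once under $d^\pi$ and once under $\mu$, the latter relying on the paper's standing assumption that $\mu$ is itself a discounted occupancy measure), and the Bellman fixed-point relation $Q^{\hat\pi} = \cT^{\hat\pi}Q^{\hat\pi}$---is exactly how such decompositions are established, and the bookkeeping you describe lines up; each cancellation goes through as you claim.
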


Note that the first two terms in the RHS of the decomposition are average Bellman errors of $f$ and $\hat \pi$ w.r.t. $\mu$ and $d^\pi$. Based on this lemma, we can directly use the average Bellman error as the regularizer instead of the squared Bellman error, which could be much larger and causes suboptimality.

\begin{proof}[Proof sketch of \Cref{thm:main_thm}] For simplicity, we assume realizability of $Q^\pi$, i.e., $\epsilon_\cF = 0$.
    By \Cref{lem:performance_diff_decomp} and the definition of $\bar \pi$, we have $J(\pi) - J(\bar \pi) = \frac{1}{K}\sum_{k=1}^K (J(\pi) - J(\pi_k))$, which equals to
    \begin{align*}
         &\frac{1}{K}\sum_{k=1}^K ( \underbrace{\E_\mu[(f_k-\cT^{\pi_k}f_k)(s,a)] }_{(a)}+ \underbrace{\E_\pi[(\cT^{\pi_k}f_k - f_k)(s,a)]}_{(b)} \\ &  + \underbrace{\E_\pi[f_k(s,\pi) - f_k(s,\pi_k)]}_{(c)} + \underbrace{\cL_\mu(\pi_k, f_k) - \cL_\mu(\pi_k, Q^{ \pi_k})}_{(d)} ).
    \end{align*}

    By the concentration argument, with high probability, we have $\cE_\mu(\pi, f) = \cE_\cD(\pi, f) \pm \epsilon_\stat$ and $\cL_{\mu}(\pi, f) = \cL_\cD(\pi, f) \pm \epsilon_\stat$ for all $\pi \in \Pi$ and $f \in \cF$.
    Combining the fact that $d^\pi/\mu \in \cW$, one can show that 
   $(a) + (b) \leq 2\cE_\mu(\pi_k, f_k) \leq 2\cE_\cD(\pi_k, f_k) + O(\epsilon_\stat)$. 
    Therefore,
    \begin{align*}
        (a)+(b)+(d) \leq& \cL_\mu(\pi_k, f_k) +  2\cE_\cD(\pi_k, f_k) + O(\epsilon_\stat)   - \cL_\mu(\pi_k, Q^{\pi_k})
        \\ \leq& \cL_\cD(\pi_k, f_k) +  2\cE_\cD(\pi_k, f_k) + O(\epsilon_\stat)   - \cL_\cD(\pi_k, Q^{\pi_k})
        \\ \leq& \cL_\cD(\pi_k, Q^{\pi_k}) +  2\cE_\cD(\pi_k, Q^{\pi_k}) + O(\epsilon_\stat)   - \cL_\cD(\pi_k, Q^{ \pi_k}) 
        \\ \leq& O(\epsilon_\stat) + 2\cE_\mu(\pi_k, Q^{\pi_k}) = O(\epsilon_\stat),
    \end{align*}
    where the third inequality holds by the optimality of $f_k$, and the last equality holds since the Bellman error of $Q^\pi$ w.r.t. $\pi$ is 0.
    Therefore, with high probability,
    \begin{align*}
        J(\pi) - J(\bar \pi) \leq O(\epsilon_\stat) + \epsilon^\pi_\opt.
    \end{align*}
\end{proof}

\section{Related Work}
\label{app:related-work}

In this section, we review additional related literature not covered in the introductions.

\subsection{Dataset Coverage Assumptions}
\label{sec:related_work_coverage_assumption}
One central challenge of offline RL is the insufficient coverage of the dataset. In RL theory, concentrability is often used to characterize dataset coverage \cite{munos2007performance, scherrer2014approximate}. For example, many earlier works require all-policy $\ell_\infty$ concentrability~\citep{scherrer2014approximate,liu2019neural,chen2019information,jiang2019value,wang2019neural,liao2020batch,zhang2020variational}. Some works even require the ratio between occupancy probability induced by polices and the dataset distribution to be bounded for every time step \citep{szepesvari2005finite, munos2007performance, antos2008learning,farahmand2010error,antos2007fitted}. The work \citet{xie2020batch} makes stronger assumptions such as requiring lower bound on conditionals $\mu(a|s)$.

Since the all-policy concentrability assumption is strong and can often be violated in practice, recent algorithms requiring only partial data coverage assumptions are developed based on single-policy $\ell_\infty$ concentrability~\citep{rashidinejad2021bridging,zhan2022offline,rashidinejad2022optimal,chen2022offline,ozdaglar2022revisiting}. However, This could still be restrictive even if only  single-policy concentrability is required since the $\ell_\infty$ concentrability is an upper bound of density ratios over all state-action pairs.

Milder versions of $\ell_\infty$ concentrability have been studied in both all-policy concentrability framework~\citep{xie2020q,xie2020batch, feng2019kernel, uehara2020minimax} or and single-policy concentrability~\citep{uehara2021pessimistic,xie2021bellman,song2022hybrid,cheng2022adversarially}. However, these works based on milder versions of $\ell_\infty$ single-policy concentrability are either computationally intractable \citep{uehara2021pessimistic, xie2021bellman} or suffer a suboptimal statistical rate~\citep{pmlr-v162-cheng22b}. 

Our work uses $\ell_2$ concentrability version, which also appears in \citet{xie2020q}. In particular, they also use weighted average Bellman error in their algorithm. However, their algorithm requires all-policy concentrability assumptions and thus cannot deal with partial dataset coverage. To the best of our knowledge among previous works, only \citet{uehara2021pessimistic} used $\ell_2$ single-policy concentrability to characterize data coverage, but their algorithm is designed for model-based settings and are computationally intractable. Another closely related work is~\citet{uehara2020minimax}, which also uses weighted average Bellman error. However, their algorithm is in the off-policy evaluation (OPE) framework, and they use $\ell_\infty$ concentrability version to characterize dataset coverage.

\subsection{Conservative offline reinforcement learning}

To address partial dataset coverage in offline RL, 
a line of recent applied works  studies conservative algorithms, which can be divided into several categories. 

The first category enforces the learned policy to be close to the behavior policy (or equivalently, dataset), which ensures that candidate policies not well covered by the dataset are eliminated. This can be accomplished by either adding constraints explicitly~\citep{fujimoto2019off,kumar2019stabilizing,wu2019behavior,jaques2019way, siegel2020keep,ghasemipour2020emaq, fujimoto2021minimalist}, implicitly~\cite{peng2019advantage, nair2020accelerating}, or by importance sampling with bounded ratio~\citep{swaminathan2015batch, liu2019off, nachum2019algaedice,  zhang2020gradientdice, Zhang2020GenDICE:, lee2021optidice}. 

The second category consists of model-based methods such as adversarial model learning \cite{rigter2022rambo}, learning pessimistic models \cite{kidambi2020morel, guo2022model},   using model ensembles to form penalties\cite{yu2020mopo}, or combining model and values \citep{yu2021combo}.

The last category  aims to learn conservative values such as fitted Q-iteration using conservative update \citep{liu2020provably}, conservative Q-learning (CQL) \citep{kumar2020conservative}, critic regularization \citep{kostrikov2021offline}, and subtracting penalties \citep{rezaeifar2022offline}. 

On the theoretical side, many works use some form of uncertainty quantification to design to ensure pessimism~\citep{yin2021towards, kumar2021should, uehara2021representation,  yin2022near, zhang2022corruption, yan2022efficacy, shi2022distributionally, wang2022gap}. Except for uncertainty quantification, in linear function approximation settings, \citet{zanette2021provable} uses value function perturbation combined with the actor-critic method. Recent advances in the theory of pessimistic algorithms include MIS \cite{zhan2022offline, chen2022offline} and adversarially trained actor-critic (ATAC) \cite{pmlr-v162-cheng22b}. In particular, our algorithm is built based on MIS combined with the actor-critic method.
\section{Discussion}
\label{sec:conclusion}
We present a new offline RL algorithm called A-Crab (\Cref{alg:acrab}) that can be combined with general function approximators and handle datasets with partial coverage. A-Crab is an actor-critic method, where the critic finds a relatively pessimistic evaluation of the actor while minimizing an importance-weighted average Bellman error. We prove that A-Crab achieves the optimal statistical rate of $1/\sqrt{N}$ converging to the best policy ``covered'' in the data. Importantly, the notion of \textit{coverage} here is a weaker $\ell_2$ variant of the single-policy concentrability, which only requires the average marginalized importance weights over visitations of the target policy to be bounded. Also, A-Crab enjoys robust policy improvement that consistently improves over the data-collection behavior policy. Moreover, we empirically validated the effectiveness of A-Crab in the D4RL benchmark.
Interesting avenues for future work include combining A-Crab's offline learning with an online fine-tuning algorithm with a limited trial-and-error budget and developing new measures for single-policy coverage that leverage both the visitation and hypothesis class structures.

\section*{Acknowledgements}

This work is partially supported by NSF Grants IIS-1901252 and CCF-1909949.

\bibliographystyle{plainnat}
\bibliography{references}

\newpage
\appendix

\section{Related Work}
\label{app:related-work}

In this section, we review additional related literature not covered in the introductions.

\subsection{Dataset Coverage Assumptions}
\label{sec:related_work_coverage_assumption}
One central challenge of offline RL is the insufficient coverage of the dataset. In RL theory, concentrability is often used to  characterize dataset coverage~\citep{munos2007performance, scherrer2014approximate}. For example, many earlier works require all-policy $\ell_\infty$ concentrability~\citep{scherrer2014approximate,liu2019neural,chen2019information,jiang2019value,wang2019neural,liao2020batch,zhang2020variational}. Some works even require the ratio between occupancy probability induced by polices and the dataset distribution to be bounded for every time step \citep{szepesvari2005finite, munos2007performance, antos2008learning,farahmand2010error,antos2007fitted}. The work \citet{xie2020batch} makes stronger assumptions such as requiring lower bound on conditionals $\mu(a|s)$.

Since the all-policy concentrability assumption is strong and can often be violated in practice, recent algorithms requiring only partial data coverage assumptions are developed based on single-policy $\ell_\infty$ concentrability~\citep{rashidinejad2021bridging,zhan2022offline,rashidinejad2022optimal,chen2022offline,ozdaglar2022revisiting}. However, This could still be restrictive even if only  single-policy concentrability is required since the $\ell_\infty$ concentrability is an upper bound of density ratios over all state-action pairs.

Milder versions of $\ell_\infty$ concentrability have been studied in both all-policy concentrability framework~\citep{xie2020q,xie2020batch, feng2019kernel, uehara2020minimax} or and single-policy concentrability~\citep{uehara2021pessimistic,xie2021bellman,song2022hybrid,cheng2022adversarially}. However, these works based on milder versions of $\ell_\infty$ single-policy concentrability are either computationally intractable \citep{uehara2021pessimistic, xie2021bellman} or suffer a suboptimal statistical rate~\citep{cheng2022adversarially}. 

Our work uses $\ell_2$ concentrability version, which also appears in \citet{xie2020q}. In particular, they also use weighted average Bellman error in their algorithm. However, their algorithm requires all-policy concentrability assumptions and thus cannot deal with partial dataset coverage. To the best of our knowledge among previous works, only \citet{uehara2021pessimistic} used $\ell_2$ single-policy concentrability to characterize data coverage, but their algorithm is designed for model-based settings and are computationally intractable. Another closely related work is~\citet{uehara2020minimax}, which also uses weighted average Bellman error. However, their algorithm is in the off-policy evaluation (OPE) framework, and they use $\ell_\infty$ concentrability version to characterize dataset coverage.

\subsection{Conservative offline reinforcement learning}

To address partial dataset coverage in offline RL, 
a line of recent applied works  studies conservative algorithms, which can be divided into several categories. 

The first category enforces the learned policy to be close to the behavior policy (or equivalently, dataset), which ensures that candidate policies not well covered by the dataset are eliminated. This can be accomplished by either adding constraints explicitly~\citep{fujimoto2019off,kumar2019stabilizing,wu2019behavior,jaques2019way, siegel2020keep,ghasemipour2020emaq, fujimoto2021minimalist}, implicitly~\citep{peng2019advantage, nair2020accelerating}, or by importance sampling with bounded ratio~\citep{swaminathan2015batch, liu2019off, nachum2019algaedice,  zhang2020gradientdice, Zhang2020GenDICE:, lee2021optidice}. 

The second category consists of model-based methods such as adversarial model learning~\citep{rigter2022rambo}, learning pessimistic models~\citep{kidambi2020morel, guo2022model},   using model ensembles to form penalties~\citep{yu2020mopo}, or combining model and values~\citep{yu2021combo}.

The last category  aims to learn conservative values such as fitted Q-iteration using conservative update \citep{liu2020provably}, conservative Q-learning (CQL) \citep{kumar2020conservative}, critic regularization \citep{kostrikov2021offline}, and subtracting penalties \citep{rezaeifar2022offline}. 

On the theoretical side, many works use some form of uncertainty quantification to design to ensure pessimism~\citep{yin2021towards, kumar2021should, uehara2021representation,  yin2022near, zhang2022corruption, yan2022efficacy, shi2022distributionally, wang2022gap}. Except for uncertainty quantification, in linear function approximation settings, \citet{zanette2021provable} uses value function perturbation combined with the actor-critic method. Recent advances in the theory of pessimistic algorithms include MIS~\citep{zhan2022offline, chen2022offline} and adversarially trained actor-critic (ATAC)~\citep{cheng2022adversarially}. In particular, our algorithm is built based on MIS combined with the actor-critic method.

\section{Proof of \Cref{prop:comparing-concentrabilities}}\label{app:comparing-concentrabilities}

The first part of the proposition is easy to see:
\begin{align*}
    &(C^\pi_{\ell_2})^2 = \|w^\pi\|_{2, \mu}^2 = \E_\mu [(w^\pi(s,a))^2] \leq \E_{\mu} \left[C^\pi_{\ell_\infty} \frac{d^\pi(s,a)}{\mu(s,a)} \right] =  C^\pi_{\ell_\infty} \E_{\mu}\left[\frac{d^\pi(s,a)}{\mu(s,a)} \right] =  C^\pi_{\ell_\infty}, \\
   & C^\pi_{\ell_2} = \| w^\pi \|_{2,\mu} \leq \| w^\pi \|_\infty = C^\pi_\infty.
\end{align*}
For the second part, consider the case where for a fixed policy $\pi$ and for any $s \in \cS, a \in \cA$, there exists $f \in \cF$ such that $(f - \cT^\pi f)(s, a)$ is non-zero only at  $(s,a)$ and is zero otherwise. The Bellman-consistent concentrability defines $ C^\pi_{\text{Bellman}}$ to be the smallest constant that 
\begin{align*}
    &\max_{f \in \cF} \frac{\|f - \cT^\pi f\|_{2, d^{\pi}}^2}{\|f - \cT^\pi f\|_{2, \mu}^2} \leq C^\pi_{\text{Bellman}} \\
    \Longrightarrow&  \max_{s,a} \frac{(f - \cT^\pi f)^2(s, a) d^\pi(s,a)}{(f - \cT^\pi f)^2(s, a) \mu(s,a)} \leq C^\pi_{\text{Bellman}} \\
    \Longrightarrow& \max_{s,a} \frac{ d^\pi(s,a)}{ \mu(s,a)} \leq C^\pi_{\text{Bellman}},
\end{align*}
which makes it equal to the $\ell_\infty$ variant. On the other hand, the $\ell_2$ variant only requires the average importance weights to be bounded:
\begin{align*}
    \sum_{s,a} \left(\frac{d^\pi(s,a)}{\mu(s,a)}\right)^2 \mu(s,a) = \E_{d^\pi} \left[ \frac{ d^\pi(s,a)}{ \mu(s,a)} \right] \leq (C^\pi_{\ell_2})^2.
\end{align*}

\section{Theoretical Analysis of the Main Theorem}
\label{app:sec_theoretical_analysis_main_thm}

In this section, we provide theoretical proof of our main theorem (\Cref{thm:main_thm}). We first present two key lemmas in \Cref{app:sec_key_lemmas} and then prove the main theorem in \Cref{app:sec_proof_of_main_thm}. For convenience, we always assume \Cref{assump:realizability,assump_boundedness_of_W,assump_realizability_of_w} hold.

\subsection{Key lemmas}
\label{app:sec_key_lemmas}

The first lemma shows that with high probability, the population version of our weighted average Bellman error regularizer is close to the empirical version. 
\begin{lemma}[Concentration of the empirical regularizer]
\label{lem:conentration_regularizer}
With probability at least $1-\delta$, for any $f \in \cF$, $\pi \in \Pi$, we have
\begin{align*}
    | \cE_\mu(\pi, f) - \cE_\cD(\pi, f) | \leq \epsilon_\stat. 
\end{align*}
\end{lemma}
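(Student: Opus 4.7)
The two quantities $\cE_\mu(\pi, f)$ and $\cE_\cD(\pi, f)$ are both maxima over $w \in \cW$ of the absolute value of an inner linear functional; specifically, setting
\begin{align*}
    g_\mu(w, \pi, f) &:= \E_\mu\bigl[w(s,a)(f - \cT^\pi f)(s,a)\bigr], \\
    g_\cD(w, \pi, f) &:= \E_\cD\bigl[w(s,a)(f(s,a) - r - \gamma f(s', \pi))\bigr],
\end{align*}
one has $\cE_\mu(\pi,f) = \max_{w \in \cW} |g_\mu(w,\pi,f)|$ and $\cE_\cD(\pi,f) = \max_{w \in \cW} |g_\cD(w,\pi,f)|$. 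The plan is therefore to establish a uniform-in-$(\pi, f, w)$ concentration inequality of the form $|g_\cD - g_\mu| \lesssim \epsilon_\stat$, and then pass this through the outer $\max_w |\cdot|$ via the elementary inequalities $\bigl||x|-|y|\bigr| \leq |x-y|$ and $|\max_w a(w) - \max_w b(w)| \leq \max_w |a(w) - b(w)|$.

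\textbf{Pointwise concentration via Bernstein.} Fix a triple $(\pi, f, w)$ and let $X_i := w(s_i, a_i)\bigl(f(s_i, a_i) - r_i - \gamma f(s_i', \pi)\bigr)$. Since $(s_i, a_i) \sim \mu$, $r_i \sim R(s_i, a_i)$, $s_i' \sim P(\cdot|s_i,a_i)$ independently across $i$, the definition of the Bellman operator $\cT^\pi$ in \eqref{eq:Bellman_equation} yields $\E[X_i] = g_\mu(w, \pi, f)$, so $\E_\cD[X]$ is unbiased. Using $w \leq B_w$ and $f, r, \gamma f(s',\pi) \in [0, V_{\max}]$, I would bound the range by $|X_i| \lesssim B_w V_{\max}$ and the variance by
\begin{align*}
    \Var(X_i) \leq \E[X_i^2] \leq V_{\max}^2 \, \E_\mu[w^2(s,a)] \leq V_{\max}^2 (C^\star_\eff)^2,
\end{align*}
where the last step is Assumption \ref{assump_boundedness_of_W}. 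Bernstein's inequality then gives, with probability at least $1 - \delta'$,
\begin{align*}
    |g_\cD(w, \pi, f) - g_\mu(w, \pi, f)| \lesssim V_{\max} C^\star_\eff \sqrt{\tfrac{\log(1/\delta')}{N}} + \tfrac{V_{\max} B_w \log(1/\delta')}{N}.
\end{align*}

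\textbf{Union bound and passing through the max.} Setting $\delta' = \delta/(|\cF||\Pi||\cW|)$ and taking a union bound over all triples $(f, \pi, w) \in \cF \times \Pi \times \cW$, the above bound holds simultaneously for every triple with $\log(1/\delta')$ replaced by $\log(|\cF||\Pi||\cW|/\delta)$, matching the definition of $\epsilon_\stat$. Applying $\bigl||g_\mu| - |g_\cD|\bigr| \leq |g_\mu - g_\cD|$ pointwise in $w$ and then the max-difference inequality in $w$ yields $|\cE_\mu(\pi, f) - \cE_\cD(\pi, f)| \leq \epsilon_\stat$ uniformly in $(\pi, f)$, which is the claim.

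\textbf{Main obstacle.} Nothing here is conceptually deep, but the essential point (and the reason Bernstein rather than Hoeffding must be used) is that the dominant $\sqrt{\log/N}$ term scales with the $\ell_2$ quantity $C^\star_\eff$ rather than the worst-case $\ell_\infty$ bound $B_w$. Routing the variance through $\E_\mu[w^2] \leq (C^\star_\eff)^2$ is precisely what later lets the main theorem state its guarantee in terms of $\ell_2$ single-policy concentrability rather than $\ell_\infty$; the only care needed is in verifying that the residual $f(s,a) - r - \gamma f(s',\pi)$ is bounded up to a universal constant times $V_{\max}$ so that the $B_w V_{\max}$ almost-sure bound really holds.
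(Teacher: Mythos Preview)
Your proposal is correct and essentially identical to the paper's argument: the paper isolates your Bernstein step as a separate lemma (Lemma~\ref{lem:concentration_weighted_avg_bellman}) and then, instead of invoking the max-difference inequality, shows both directions explicitly by inserting the respective maximizers $w^*_{\pi,f}$ and $\hat w_{\pi,f}$ and using their optimality together with the triangle inequality. The two arguments are interchangeable and your observation that Bernstein (rather than Hoeffding) is what buys the $C^\star_{\ell_2}$ dependence is exactly the point.
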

\begin{proof}
    We condition on the high probability event in \Cref{lem:concentration_weighted_avg_bellman}. For any $f \in \cF$ and $\pi \in \Pi$, define $w^*_{\pi,f} =  \arg\max_{w\in\cW} \cE_\mu(\pi, f) = \arg\max_{w\in\cW} |\E_\mu[w(s,a)(f-\cT^\pi f)(s,a)]|$ and define $\hat w_{\pi,f} = \arg\max_{w\in\cW} \cE_\cD(\pi, f)
 = \arg\max_{w\in\cW} \left|\frac{1}{N}\sum_{(s,a,r,s')\in \cD}  w(s,a) (f(s,a) - r - \gamma f(s', \pi))\right|$. Then 
    \begin{align*}
         & \cE_\mu(\pi, f) - \cE_\cD(\pi, f)   
         \\  =&   |\E_\mu[w^*_{\pi,f}(s,a)(f-\cT^\pi f)(s,a)]| - \left|\frac{1}{N}\sum_{(s,a,r,s')\in \cD} \hat w_{\pi,f}(s,a) (f(s,a) - r - \gamma f(s', \pi))\right| \\
         =&   |\E_\mu[w^*_{\pi,f}(s,a)(f-\cT^\pi f)(s,a)]| - |\E_\mu[\hat w_{\pi,f}(s,a)(f-\cT^\pi f)(s,a)]|  \\ &+ |\E_\mu[\hat w_{\pi,f}(s,a)(f-\cT^\pi f)(s,a)]| -  \left|\frac{1}{N}\sum_{(s,a,r,s')\in \cD} \hat w_{\pi,f}(s,a) (f(s,a) - r - \gamma f(s', \pi))\right| \\ 
         \geq& 0 - \epsilon_\stat = -\epsilon_\stat,
    \end{align*}
    where the inequality holds by the optimality of $w^*_{\pi, f}$ and \Cref{lem:concentration_weighted_avg_bellman}.
Similarly, 
 \begin{align*}
         & \cE_\mu(\pi, f) - \cE_\cD(\pi, f)  \\
         =&   |\E_\mu[w^*_{\pi,f}(s,a)(f-\cT^\pi f)(s,a)]| - \left|\frac{1}{N}\sum_{(s,a,r,s')\in \cD} w^*_{\pi,f}(s,a) (f(s,a) - r - \gamma f(s', \pi))\right| \\ &+ \left|\frac{1}{N}\sum_{(s,a,r,s')\in \cD} w^*_{\pi,f}(s,a) (f(s,a) - r - \gamma f(s', \pi))\right| \\ &-  \left|\frac{1}{N}\sum_{(s,a,r,s')\in \cD} \hat w_{\pi,f}(s,a) (f(s,a) - r - \gamma f(s', \pi))\right| \\ 
         \leq& \epsilon_\stat + 0 = \epsilon_\stat,
    \end{align*}
    where the inequality holds by the optimality of $\hat w_{\pi, f}$ and \Cref{lem:concentration_weighted_avg_bellman}.
\end{proof}

The next lemma provides a high-probability upper bound of the empirical weighted average Bellman error of $f_\pi$ w.r.t. $\pi$, where $f_\pi$ is the (approximate) $Q$-function of $\pi$.

\begin{lemma}[Empirical weighted average Bellman error of approximate $Q$ function]
\label{lem:empirical_bellman_error_of_Q_function}
With probability at least $1-\delta$, for any $\pi \in \Pi$, we have
\begin{align*}
    \cE_\cD(\pi, f_\pi) \leq C_\eff^\star \sqrt{\epsilon_\cF} + \epsilon_\stat. 
\end{align*}
where $f_\pi = \arg\min_{f\in\cF} \max_{\text{admissible }\nu} \| f-\cT^\pi f \|_{2,\nu}^2$.
\end{lemma}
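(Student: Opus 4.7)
The plan is to combine the concentration lemma for the empirical regularizer (Lemma on $\cE_\mu$ vs.\ $\cE_\cD$) with an application of Cauchy--Schwarz and the approximate realizability assumption. First, I would use the already-established concentration bound to pass from $\cE_\cD(\pi, f_\pi)$ to $\cE_\mu(\pi, f_\pi) + \epsilon_\stat$ uniformly over $\pi \in \Pi$ and $f \in \cF$ with probability at least $1-\delta$.

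Next, I would upper bound $\cE_\mu(\pi, f_\pi)$ directly by unfolding its definition:
\begin{equation*}
\cE_\mu(\pi, f_\pi) = \max_{w \in \cW} \bigl| \E_\mu[w(s,a)(f_\pi - \cT^\pi f_\pi)(s,a)] \bigr| \leq \max_{w \in \cW} \|w\|_{2,\mu} \cdot \|f_\pi - \cT^\pi f_\pi\|_{2,\mu},
\end{equation*}
where the inequality is just Cauchy--Schwarz with respect to the measure $\mu$. The first factor is bounded by $C^\star_\eff$ by \Cref{assump_boundedness_of_W}.

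For the second factor, the key observation is that $\mu$ itself is an admissible distribution: the dataset distribution is the discounted occupancy of the behavior policy, which the paper assumes lies in $\Pi$. Hence by \Cref{assump:realizability} applied to $f_\pi$, we have $\|f_\pi - \cT^\pi f_\pi\|_{2,\mu}^2 \leq \max_{\text{admissible } \nu}\|f_\pi - \cT^\pi f_\pi\|_{2,\nu}^2 \leq \epsilon_\cF$. Putting the pieces together yields $\cE_\mu(\pi, f_\pi) \leq C^\star_\eff \sqrt{\epsilon_\cF}$, which combined with the concentration step gives the stated bound.

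There is no substantial obstacle here; the only subtlety is the recognition that $\mu$ belongs to the admissible class so that the realizability guarantee can be invoked at $\nu = \mu$. The uniform-in-$\pi$ aspect is already handled by the concentration lemma, so no additional union bound is needed beyond what is already absorbed into $\epsilon_\stat$.
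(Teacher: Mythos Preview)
Your proposal is correct and follows essentially the same argument as the paper: condition on the concentration event for $\cE_\cD$ versus $\cE_\mu$, then bound $\cE_\mu(\pi,f_\pi)$ via Cauchy--Schwarz together with \Cref{assump_boundedness_of_W} and \Cref{assump:realizability}. Your explicit remark that $\mu$ is admissible (because the behavior policy lies in $\Pi$) is precisely the step the paper invokes implicitly when it appeals to the definition of $f_\pi$ and the realizability assumption.
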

\begin{proof}
    We condition on the high probability event in \Cref{lem:conentration_regularizer}. Since 
    \begin{align*}
        \cE_\mu(\pi, f_\pi) =& \max_{w\in\cW} | \E_\mu[w(s,a)(f_\pi-\cT^\pi f_\pi)(s,a)] | \\ 
        \leq& \max_{w\in\cW}  \|w \|_{2,\mu} \|f_\pi-\cT^\pi f_\pi\|_{2,\mu}
        \\ \leq& C^\star_{\eff} \sqrt{\epsilon_\cF},
    \end{align*} where the first inequality is by Cauchy-Schwarz inequality and the second inequality is by the definition of $f_\pi$ and  \Cref{assump:realizability}, we can immediately obtain that
    \begin{align*}
        \cE_\cD(\pi, f_\pi) \leq \cE_\mu(\pi, f_\pi) + \epsilon_\stat = C_\eff^\star \sqrt{\epsilon_\cF} + \epsilon_\stat.
    \end{align*}
\end{proof}

\subsection{Complementary lemmas}
\label{app:sec_complementary_lemmas}

We provide two complementary lemmas in this section, which are both high-probability concentration inequalities. 

\begin{lemma}[Concentration of weighted average Bellman error]
\label{lem:concentration_weighted_avg_bellman}
    With probability at least $1-\delta$, for any $f \in \cF$, $\pi \in \Pi$ and $w \in \cW$, we have
    \begin{align*}
         &\left| \left|\E_\mu[(f-\cT^\pi f)w]\right| - 
         \left| \frac{1}{N} \sum_{(s,a,r,s')\in\cD} w(s,a)(f(s,a)-r-\gamma f(s',\pi))\right| \right|
         \\ \leq& O\left( V_{\max} C^\star_\eff \sqrt{\frac{\log(|\cF||\Pi||\cW|/\delta)}{N}} + \frac{V_{\max} B_w \log (|\cF||\Pi||\cW|/\delta)}{N}\right) \triangleq \epsilon_{\stat}.
    \end{align*}
\end{lemma}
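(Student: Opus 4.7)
The plan is a standard Bernstein concentration argument followed by a union bound over the finite classes $\cF \times \Pi \times \cW$, with the absolute values handled at the very end via the reverse triangle inequality. For each fixed triple $(f,\pi,w)$, define the i.i.d.\ random variables
\begin{align*}
X_i \;=\; w(s_i,a_i)\bigl(f(s_i,a_i) - r_i - \gamma f(s_i',\pi)\bigr), \qquad i=1,\dots,N.
\end{align*}
The first observation I would establish is that the empirical mean is an unbiased estimator of the target population quantity: since $\E[r_i\mid s_i,a_i] = r(s_i,a_i)$ and $\E[f(s_i',\pi)\mid s_i,a_i] = \E_{s'\sim P(\cdot\mid s_i,a_i)}[f(s',\pi)]$, conditioning on $(s_i,a_i)$ inside the expectation yields $\E[X_i] = \E_\mu\!\left[w(s,a)(f-\cT^\pi f)(s,a)\right]$. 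So it suffices to concentrate $\tfrac1N\sum_i X_i$ around $\E[X_1]$.

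Next I would collect the two moment bounds Bernstein needs. For the range, $|X_i| \leq B_w \cdot |f(s_i,a_i) - r_i - \gamma f(s_i',\pi)| \lesssim B_w V_{\max}$ using $f \in [0,V_{\max}]$, $r\in[0,1]$, and $\gamma<1$. For the variance, the key step (and the only one that is not purely mechanical) is to use \Cref{assump_boundedness_of_W} to replace the worst-case bound on $w$ with the $\ell_2(\mu)$ bound:
\begin{align*}
\Var(X_1) \;\leq\; \E[X_1^2] \;\lesssim\; V_{\max}^2\,\E_\mu[w(s,a)^2] \;\leq\; V_{\max}^2 (C^\star_\eff)^2.
\end{align*}
This is what produces the $C^\star_\eff$ factor in the first (fast) term of $\epsilon_\stat$; using a naive Hoeffding bound would give the strictly worse $B_w V_{\max}/\sqrt{N}$.

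With these two bounds in hand, Bernstein's inequality yields, for any fixed $(f,\pi,w)$, with probability at least $1-\delta'$,
\begin{align*}
\left|\tfrac1N\textstyle\sum_i X_i - \E[X_1]\right| \;\lesssim\; V_{\max} C^\star_\eff \sqrt{\tfrac{\log(1/\delta')}{N}} + \tfrac{V_{\max} B_w \log(1/\delta')}{N}.
\end{align*}
Taking a union bound over the $|\cF||\Pi||\cW|$ triples (set $\delta' = \delta/(|\cF||\Pi||\cW|)$) replaces $\log(1/\delta')$ by $\log(|\cF||\Pi||\cW|/\delta)$, matching the definition of $\epsilon_\stat$. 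Finally, I invoke the reverse triangle inequality $\bigl||a|-|b|\bigr| \leq |a-b|$ with $a = \E[X_1]$ and $b = \tfrac1N\sum_i X_i$ to pass from the signed-difference bound to the absolute-value-difference bound in the lemma statement.

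The main (and essentially only) obstacle is the variance calculation, ensuring that the inner expectation $\E[X_1^2]$ really is controlled by $(C^\star_\eff)^2 V_{\max}^2$ rather than by $\|w\|_\infty^2 V_{\max}^2$; everything else is routine. A minor bookkeeping issue is that the problem is stated with i.i.d.\ $(s_i,a_i) \sim \mu$ and with $r_i, s_i'$ drawn conditionally, so $X_1,\dots,X_N$ are genuinely i.i.d., which lets me apply the classical one-dimensional Bernstein bound without needing martingale refinements.
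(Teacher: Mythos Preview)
Your proposal is correct and matches the paper's proof essentially line for line: fix $(f,\pi,w)$, verify unbiasedness by conditioning on $(s,a)$, apply Bernstein with the variance controlled via $\E[X_1^2]\lesssim V_{\max}^2\|w\|_{2,\mu}^2\le V_{\max}^2(C^\star_\eff)^2$ and the range by $B_wV_{\max}$, then union bound and finish with the reverse triangle inequality. There is nothing to add.
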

\begin{proof}
It suffices to bound 
\begin{align*}
    \left| \E_\mu[(f-\cT^\pi f)w]  -   \frac{1}{N} \sum_{(s,a,r,s')\in\cD} w(s,a)(f(s,a)-r-\gamma f(s',\pi)) \right|
\end{align*}
for any fixed $f \in \cF$, $\pi \in \Pi$, $w \in \cW$, and a union bound and the triangle inequality conclude.

Note that 
\begin{align*}
 &\E_{\mu}\left[ \frac{1}{N} \sum_{(s,a,r,s')\in\cD} (f(s,a)-r-\gamma f(s',\pi))w(s,a) \right]  
 \\ =& \E_{(s,a) \sim \mu, r \sim r(s,a), s' \sim P(\cdot|s,a)}\left[   (f(s,a)-r-\gamma f(s',\pi))w(s,a) \right] 
 \\ =& \E_{(s,a) \sim \mu} \E_{r \sim r(s,a), s' \sim P(\cdot|s,a)} \left[   (f(s,a)-r-\gamma f(s',\pi))w(s,a) | s,a \right] 
 \\ =& \E_{(s,a) \sim \mu} [(f(s,a) - \cT^\pi f (s,a))w(s,a)].
\end{align*}
Then by Bernstein's inequality, we have that  with probability at least $1-\delta$, 
\begin{align*}
    &\left| \E_\mu[(f-\cT^\pi f)w]  -   \frac{1}{N} \sum_{(s,a,r,s')\in\cD} (f(s,a)-r-\gamma f(s',\pi))w(s,a) \right| \\ 
    \leq& O\left( \sqrt{\frac{\Var_{\mu}[(f-\cT^\pi f)w]\log(1/\delta)}{N}} + \frac{V_{\max} B_w \log (1/\delta)}{N}\right).
\end{align*}
Since 
\begin{align*}
    \Var_{\mu}[(f-\cT^\pi f)w] \leq \E_\mu[ (f-\cT^\pi f)^2w^2 ] \leq O(V_{\max}^2 \| w\|_{2,\mu}^2) \leq O(V_{\max}^2 (C^\star_\eff)^2),
\end{align*}
we can obtain that 
\begin{align*}
        &\left| \E_\mu[(f-\cT^\pi f)w]  -   \frac{1}{N} \sum_{(s,a,r,s')\in\cD} (f(s,a)-r-\gamma f(s',\pi))w(s,a) \right| \\ 
    \leq& O\left( V_{\max} C^\star_\eff \sqrt{\frac{\log(1/\delta)}{N}} + \frac{V_{\max} B_w \log (1/\delta)}{N}\right),
\end{align*}
which implies the result.
\end{proof}

\begin{lemma}[Concentration of the actor's objective]
\label{lem:concentration_of_max_player_obj}
     With probability at least $1-\delta$, for any $f \in \cF$, $\pi \in \Pi$, we have
    \begin{align*}
         \left| \cL_\mu(\pi, f)  - 
        \cL_\cD(\pi, f)  \right|
          \leq O\left(V_{\max}\sqrt{\frac{\log(|\cF||\Pi|/\delta)}{N}}\right) \leq \epsilon_{\stat}
    \end{align*}
    where $\epsilon_\stat$ is defined in \Cref{lem:concentration_weighted_avg_bellman}.
\end{lemma}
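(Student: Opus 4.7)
The plan is to prove this by a standard pointwise Hoeffding concentration combined with a union bound over the finite classes $\cF$ and $\Pi$, then compare the resulting rate to $\epsilon_\stat$.

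First I would fix $\pi \in \Pi$ and $f \in \cF$ and define the i.i.d.\ random variables $X_i = f(s_i, \pi) - f(s_i, a_i)$ for $i = 1, \ldots, N$, where $(s_i, a_i) \sim \mu$. By the definition of $\cL_\cD$ and $\cL_\mu$, we have $\cL_\cD(\pi, f) = \frac{1}{N}\sum_{i=1}^N X_i$ and $\E[X_i] = \cL_\mu(\pi, f)$. Since $f : \cS \times \cA \to [0, V_{\max}]$ and $f(s,\pi) = \sum_a \pi(a|s) f(s,a)$ is a convex combination, both $f(s_i, \pi)$ and $f(s_i, a_i)$ lie in $[0, V_{\max}]$, so $|X_i| \leq V_{\max}$ almost surely. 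This is a much simpler situation than the one in \Cref{lem:concentration_weighted_avg_bellman}, because there is no importance weight $w$ that could inflate the range, and no nested randomness in $(r, s')$ to deal with.

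Next I would apply Hoeffding's inequality to the bounded i.i.d.\ sum, yielding
\[
    \Pr\left(|\cL_\cD(\pi, f) - \cL_\mu(\pi, f)| > t\right) \leq 2 \exp\left(-\frac{2 N t^2}{V_{\max}^2}\right),
\]
and then take a union bound over the $|\cF| \cdot |\Pi|$ pairs $(f, \pi)$. Choosing $t = O\!\left(V_{\max} \sqrt{\log(|\cF||\Pi|/\delta)/N}\right)$ makes the total failure probability at most $\delta$, which is the first inequality in the statement.

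For the second inequality $O(V_{\max}\sqrt{\log(|\cF||\Pi|/\delta)/N}) \leq \epsilon_\stat$, I would note that $\epsilon_\stat$ as defined in \Cref{lem:concentration_weighted_avg_bellman} is obtained under a strictly larger log-cardinality (it includes a $|\cW|$ factor) and a Bernstein-type bound with leading constant $C^\star_\eff$; since the all-one function is assumed to lie in $\cW$, we have $C^\star_\eff \geq 1$, and the additional $\frac{V_{\max} B_w \log(|\cF||\Pi||\cW|/\delta)}{N}$ term in $\epsilon_\stat$ is nonnegative, so the Hoeffding rate is dominated by $\epsilon_\stat$. I do not expect any serious obstacles here: the lemma is a clean i.i.d.\ concentration with bounded summands, and the main ``content'' is just ensuring the bookkeeping matches the definition of $\epsilon_\stat$ so that this lemma can be invoked in tandem with \Cref{lem:conentration_regularizer} throughout the proof of \Cref{thm:main_thm}.
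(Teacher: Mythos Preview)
Your proposal is correct and matches the paper's own proof essentially line for line: the paper also fixes $(f,\pi)$, observes $|f(s,\pi)-f(s,a)|\le O(V_{\max})$, applies Hoeffding's inequality, and union-bounds over $\cF\times\Pi$. Your additional justification that $C^\star_\eff\ge 1$ (since the all-one function lies in $\cW$) to verify the second inequality $\le \epsilon_\stat$ is a detail the paper leaves implicit, but your reasoning is sound.
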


\begin{proof}
    Note that $\E_\mu[\cL_\cD(\pi, f)] = \cL_\mu(\pi, f)$ and $|f(s,\pi) - f(s,a)| \leq O(V_{\max})$. Applying a Hoeffding's inequality for any fixed $f$, $\pi$ and a union bound over all  $f \in \cF$, $\pi \in \Pi$, we can obtain that with probability at least $1-\delta$, it holds that 
    \begin{align*}
        \left| \cL_\mu(\pi, f)  - 
        \cL_\cD(\pi, f)  \right| \leq O\left(V_{\max}\sqrt{\frac{\log(|\cF||\Pi|/\delta)}{N}}\right).
    \end{align*}
    for all $f \in \cF$ and $\pi \in \Pi$.
\end{proof}

\subsection{Proof of \Cref{thm:main_thm}}
\label{app:sec_proof_of_main_thm}

Now we are able to prove our main theorem equipped with lemmas in previous sections.

\begin{proof}[Proof of \Cref{thm:main_thm}]
    By \Cref{lem:performance_diff_decomp} and the definition of $\bar \pi$, we have
    \begin{align*}
        &J(\pi) - J(\bar \pi) = \frac{1}{K}\sum_{k=1}^K (J(\pi) - J(\pi_k)) \\
        =& \frac{1}{K}\sum_{k=1}^K ( \underbrace{\E_\mu[(f_k-\cT^{\pi_k}f_k)(s,a)] }_{(a)}+ \underbrace{\E_\pi[(\cT^{\pi_k}f_k - f_k)(s,a)]}_{(b)} \\ & \qquad \qquad + \underbrace{\E_\pi[f_k(s,\pi) - f_k(s,\pi_k)]}_{(c)} + \underbrace{\cL_\mu(\pi_k, f_k) - \cL_\mu(\pi_k, Q^{ \pi_k})}_{(d)} ).
    \end{align*}

    Now we condition on the high probability event in \Cref{lem:conentration_regularizer} and \Cref{lem:concentration_of_max_player_obj} simultaneously and rescale $\delta$ by $1/2$ and apply a union bound. Note that $d^\pi/\mu \in \cW$, which implies that $(a) + (b) \leq 2\cE_\mu(\pi_k, f_k) \leq 2\cE_\cD(\pi_k, f_k) + 2\epsilon_\stat$, where the last inequality holds by \Cref{lem:conentration_regularizer}. By Lemma 13 of \citet{cheng2022adversarially}, we can obtain that 
    \begin{align*}
    |\cL_\mu(\pi_k, Q^{\pi_k}) - \cL_\mu(\pi_k, f_{\pi_k})|
    \leq \|f_{\pi_k} - \cT^{\pi_k}f_{\pi_k} \|_{2,\mu} +  \|f_{\pi_k} - \cT^{\pi_k}f_{\pi_k} \|_{2,d^{\pi_k}} \leq O(\sqrt{\epsilon_\cF}),
    \end{align*}
    where $f_{\pi_k} \triangleq \arg\min_{f\in\cF} \max_{\text{admissible } \nu} \|f - \cT^\pi f \|_{2,\nu}^2$.
    Also, by \Cref{lem:concentration_of_max_player_obj}, we have
    \begin{align*}
        |\cL_\mu(\pi_k, f_k) - \cL_\cD(\pi_k, f_k)| \ + \ |\cL_\mu(\pi_k, f_{\pi_k}) - \cL_\cD(\pi_k, f_{\pi_k})| \leq O(\epsilon_\stat).
    \end{align*}
    Therefore,
    \begin{align*}
        (a)+(b)+(d) \leq& \cL_\mu(\pi_k, f_k) +  2\cE_\cD(\pi_k, f_k) + 2\epsilon_\stat   - \cL_\mu(\pi_k, f_{\pi_k}) + O(\sqrt{\epsilon_\cF})
        \\ \leq& \cL_\cD(\pi_k, f_k) +  2\cE_\cD(\pi_k, f_k) + O(\epsilon_\stat)   - \cL_\cD(\pi_k, f_{ \pi_k}) + O(\sqrt{\epsilon_\cF})
        \\ \leq& \cL_\cD(\pi_k, f_{\pi_k}) +  2\cE_\cD(\pi_k, f_{\pi_k}) + O(\epsilon_\stat)   - \cL_\cD(\pi_k, f_{ \pi_k}) + O(\sqrt{\epsilon_\cF})
        \\ \leq& O(\epsilon_\stat + C^\star_\eff \sqrt{\epsilon_\cF}).
    \end{align*}
    where the third inequality holds by the optimality of $f_k$, and the last inequality holds by \Cref{lem:empirical_bellman_error_of_Q_function}.
    Therefore, 
    \begin{align*}
        J(\pi) - J(\bar \pi) \leq O(\epsilon_\stat + C^\star_\eff \sqrt{\epsilon_\cF}) + \epsilon^\pi_\opt.
    \end{align*}
\end{proof}

\section{Analysis of Robust Policy Improvement}
\label{app:sec_RPI}

\begin{proof}[Proof of \Cref{thm:RPI}]
 By \Cref{lem:performance_diff_decomp} and the definition of $\bar \pi$, we have
    \begin{align*}
        &J(\mu) - J(\bar \pi) = \frac{1}{K}\sum_{k=1}^K (J(\mu) - J(\pi_k)) \\
        =& \frac{1}{K}\sum_{k=1}^K ( \underbrace{\E_\mu[(f_k-\cT^{\pi_k}f_k)(s,a)] }_{(a)}+ \underbrace{\E_\mu[(\cT^{\pi_k}f_k - f_k)(s,a)]}_{(b)} \\ & \qquad \qquad + \underbrace{\E_\pi[f_k(s,\mu) - f_k(s,\pi_k)]}_{(c)} + \underbrace{\cL_\mu(\pi_k, f_k) - \cL_\mu(\pi_k, Q^{ \pi_k})}_{(d)} ).
    \end{align*}

    Now we condition on the high probability event in \Cref{lem:conentration_regularizer} and \Cref{lem:concentration_of_max_player_obj} simultaneously and rescale $\delta$ by $1/2$ and apply a union bound. Note that  $(a) + (b) = 0$. By Lemma 13 of \citet{cheng2022adversarially}, we can obtain that 
    \begin{align}
    \label{eq:bounded_L_mu_of_Q_and_f_pi}
    |\cL_\mu(\pi_k, Q^{\pi_k}) - \cL_\mu(\pi_k, f_{\pi_k})|
    \leq \|f_{\pi_k} - \cT^{\pi_k}f_{\pi_k} \|_{2,\mu} +  \|f_{\pi_k} - \cT^{\pi_k}f_{\pi_k} \|_{2,d^{\pi_k}} \leq O(\sqrt{\epsilon_\cF}),
    \end{align}
    where $f_{\pi_k} \triangleq \arg\min_{f\in\cF} \max_{\text{admissible } \nu} \|f - \cT^\pi f \|_{2,\nu}^2$.
    Also, by \Cref{lem:concentration_of_max_player_obj}, we have
    \begin{align}
    \label{eq:bounded_L_mu_L_D}
        |\cL_\mu(\pi_k, f_k) - \cL_\cD(\pi_k, f_k)| \ + \ |\cL_\mu(\pi_k, f_{\pi_k}) - \cL_\cD(\pi_k, f_{\pi_k})| \leq O(\epsilon_\stat).
    \end{align}
    Therefore,
    \begin{align*}
        &\cL_\mu(\pi_k, f_k) - \cL_\mu(\pi_k, Q^{ \pi_k}) \\ \leq& \cL_\mu(\pi_k, f_k) +  \beta\cE_\cD(\pi_k, f_k)   - \cL_\mu(\pi_k, Q^{\pi_k}) & (\cE_\cD(\cdot,\cdot) \geq 0)
        \\ \leq& \cL_\mu(\pi_k, f_k) +  \beta\cE_\cD(\pi_k, f_k)   - \cL_\mu(\pi_k, Q^{\pi_k}) - \beta \cE_\cD(\pi_k, f_{\pi_k}) \\ & + \beta C_\eff^\star \sqrt{\epsilon_\cF} + \beta \epsilon_\stat & (\text{\Cref{lem:empirical_bellman_error_of_Q_function}})
        \\ \lesssim& \cL_\cD(\pi_k, f_k) +  \beta\cE_\cD(\pi_k, f_k)   - \cL_\cD(\pi_k, f_{\pi_k}) - \beta \cE_\cD(\pi_k, f_{\pi_k}) \\ &+ (\beta + 1)(\epsilon_\stat + C_{\ell_2}^\star\sqrt{\epsilon_\cF}) & (\eqref{eq:bounded_L_mu_of_Q_and_f_pi},\eqref{eq:bounded_L_mu_L_D})
        \\ \leq&   (\beta + 1)(\epsilon_\stat + C_{\ell_2}^\star\sqrt{\epsilon_\cF}). & (\text{Optimality of } f_k)
    \end{align*}
    Therefore, 
    \begin{align*}
        J(\mu) - J(\bar \pi) \lesssim (\beta + 1)(\epsilon_\stat + C_{\ell_2}^\star\sqrt{\epsilon_\cF}) + \epsilon^\pi_\opt.
    \end{align*}   
\end{proof}

\section{Analysis of the Suboptimality of ATAC}
\label{app:sec_suboptimality_analysis_ATAC}

In this section, we prove \Cref{prop_suboptimality_ATAC} in \Cref{app:sec_proof_of_suboptimality_ATAC}. For convenience, we use $\Bern(p)$ to denote a Bernoulli variable with parameter $p$, and use $\Bin(n,p)$ to denote a binomial variable with parameters $n$ and $p$.

\subsection{Proof of \Cref{prop_suboptimality_ATAC}}
\label{app:sec_proof_of_suboptimality_ATAC}
\paragraph{Construction of a two-arm bandit example.}
Assume there are two arms $a_1, a_2$. Assume $\beta = \Tilde{\Theta}(N^\alpha)$ where $\alpha > \frac{1}{2}$. Note that this is a more general case than $\beta = \Theta(N^{2/3})$ as stated in the proposition. Let $\Delta = \min\{ \beta/N, 1/10 \}$. Assume the reward of the first arm is deterministic, i.e., $r(a_1) = 1/2 + \Delta$, and $r(a_2) = \Bern(1/2)$. Let $\Pi = \{ \pi_1, \pi_2 \}$, where $\pi_1(a_1) = 1, \pi_1(a_2) = 0$, and $\pi_2(a_1) = 0, \pi_2(a_2) = 1$. Also, let $\cF = \{ f_1, f_2 \}$, where $f_1(a_1) = 1/2 + \Delta, f_1(a_2) = 1/2$ and $f_2(a_1) = 1/2 + \Delta$, $f_2(a_2) =  1/2 + 2\Delta$. Finally, the dataset distribution $\mu$ satisfies $\mu(a_2) = \frac{1}{N\Delta^2}$, $\mu(a_1) = 1 - \mu(a_2)$. Note that in bandit settings, for any policy $\pi$ and any function $f$, we have $Q^\pi = \cT^\pi f = r$. Therefore, the example above satisfies both completeness and realizability assumptions.

\paragraph{Proof of the suboptimality of ATAC.} The ATAC algorithm is simplified to 
\begin{equation*}
\begin{aligned}
 &\hat \pi \in \arg\max_{\pi \in \Pi}  \E_{\mu}[f^\pi(\pi) - f^\pi(a)],  \\
    \text{s.t.} \ \ &f^\pi \in \arg\min_{f\in\cF} \E_{\mu}[f(\pi) - f(a)] + \beta \E_\mu[((f-r)(a))^2],
\end{aligned}    
\end{equation*}
in bandit settings.

For convenience, we assume that $\mathbb{P}_\mu(a = a_1) = \mathbb{P}_\cD(a = a_1)$. Note that by anti-concentration of the binomial distribution (\Cref{lem:anti-concentration-binomial}), we have that with constant probability, it holds that $\hat r(a_2) \geq \frac{1}{2} + \frac{2}{\sqrt{N\mu(a_2)}} = \frac{1}{2} + 2\Delta$.

Conditioned on this event, 
we compute $f^{\pi_1}$ and $f^{\pi_2}$ separately.
We have
\begin{align*}
    \cL_\cD(\pi_1,f_1) =&  \E_{\mu}[f_1(\pi_1) - f_1(a)] = f_1(\pi_1) - \mu(a_1)f_1(a_1) - \mu(a_2)f_1(a_2)  \\ =& \mu(a_2)(f_1(a_1) - f_1(a_2)) = \frac{1}{N\Delta}, \\
   \cL_\cD(\pi_1,f_2) =&  \E_{\mu}[f_2(\pi_1) - f_2(a)] = f_2(\pi_1) - \mu(a_1)f_2(a_1) - \mu(a_2)f_2(a_2)  \\ =& \mu(a_2)(f_2(a_1) - f_2(a_2)) = -\frac{1}{N\Delta},
\end{align*}
and
\begin{align*}
    \beta\cE_\cD(\pi_1,f_1) =& \beta \E_\mu[((f_1-r)(a))^2] = \beta \mu(a_2) \left(\frac{1}{2}\right)^2 = \frac{\beta\mu(a_2)}{4}, \\
    \beta\cE_\cD(\pi_1,f_2) =& \beta \E_\mu[((f_2-r)(a))^2] \\ =& \beta\mu(a_2)\left(\hat r(a_2) \left(\frac{1}{2} - 2\Delta\right)^2 + (1-\hat r(a_2)) \left( \frac{1}{2} + 2\Delta\right)^2 \right) \\ \leq& \beta\mu(a_2)\left(\left( \frac{1}{2} +2\Delta\right) \left(\frac{1}{2} - 2\Delta\right)^2 + \left( \frac{1}{2} - 2\Delta\right) \left( \frac{1}{2} + 2\Delta\right)^2 \right) 
    \\ \leq& \beta\mu(a_2) \left(\left(\frac{1}{2}\right)^2 - \left(2\Delta\right)^2\right) \\ =& \frac{\beta \mu(a_2)}{4} - 4\frac{\beta}{N}.
\end{align*}

Also, we have
\begin{align*}
    \cL_\cD(\pi_2,f_1) =&  \E_{\mu}[f_1(\pi_2) - f_1(a)] = f_1(\pi_2) - \mu(a_1)f_1(a_1) - \mu(a_2)f_1(a_2)  \\ =& \mu(a_1)(f_1(a_2) - f_1(a_1)) =  -\left(1-\frac{1}{N\Delta^2}\right)\Delta, \\
   \cL_\cD(\pi_2,f_2) =&  \E_{\mu}[f_2(\pi_2) - f_2(a)] = f_2(\pi_2) - \mu(a_1)f_2(a_1) - \mu(a_2)f_2(a_2)  \\ =& \mu(a_1)(f_2(a_2) - f_2(a_1))= \left(1-\frac{1}{N\Delta^2}\right)\Delta,
\end{align*}
and
\begin{align*}
    &\beta\cE_\mu(\pi_2,f_1) = \beta\cE_\cD(\pi_1,f_1) = \frac{\beta\mu(a_2)}{4}, \\
    &\beta\cE_\mu(\pi_2,f_2) = \beta\cE_\cD(\pi_1,f_2)  \leq  \frac{\beta\mu(a_2)}{4} - 4\frac{\beta}{N}.
\end{align*}
Since $\beta/N \geq \Delta$, we can obtain that $\cL_\cD(\pi_2, f_1) + \beta\cE_\cD(\pi_2, f_1) > \cL_\cD(\pi_2, f_2) + \beta\cE_\cD(\pi_2, f_2)$, which implies that $f^{\pi_2} = f_2$. Finally, note that 
\begin{align*}
    \cL_\cD(\pi_2, f^{\pi_2}) = \cL_\cD(\pi_2, f_2) > \frac{\Delta}{2} \geq \frac{1}{N\Delta} = |\cL_\cD(\pi_1, f^{\pi_1})|, 
\end{align*}
we have $\hat \pi^ = \pi_2$ by ATAC algorithm. Note that $J(\pi_1) - J(\hat \pi^) = r(\pi_1) - r(\pi_2) = \Delta \gg \Tilde{\Omega}(1/\sqrt{N})$. Therefore, for any $\alpha > 1/2$, there exists an instance s.t. ATAC cannot achieve the optimal rate $O(1/\sqrt{N})$. In particular, when $\beta = \Theta(N^{2/3})$, we have $J(\pi_1) - J(\hat \pi) = \Delta = \Omega(1/N^{1/3})$.

\subsection{Complementary lemma}

\begin{lemma}[Anti-concentration of Binomial distribution, adapted from Proposition 7.3.2 of \citet{matouvsek2001probabilistic}]
\label{lem:anti-concentration-binomial} 
    Let $X \sim \Bin(n, \frac{1}{2})$ be a binomial random variable with mean $\mu = \frac{n}{2}$. Then we have that for any $t\in[0,\frac{1}{8}]$ and universal constants $c_1,c_2$,
    \begin{align*}
        \Pr\left(X \geq \mu + nt \right)\ge c_1 e^{-c_2 t^2n}.
    \end{align*}
\end{lemma}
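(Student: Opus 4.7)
The plan is to prove the anti-concentration estimate directly via Stirling's formula applied to the binomial coefficient, followed by a summation argument to absorb the polynomial prefactor that Stirling produces. Writing $\Pr(X \geq n/2 + nt) = \sum_{k \geq n/2 + nt} \binom{n}{k} 2^{-n}$, the strategy is first to lower bound a single central term $\binom{n}{n/2+m} 2^{-n}$ for integer $m \in [nt, 2nt]$, and then to sum over a block of roughly $\sqrt{n}$ consecutive terms so that the resulting bound has no $1/\sqrt{n}$ prefactor.

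First I would apply Stirling's formula $n! = \sqrt{2\pi n}(n/e)^n(1+O(1/n))$ to each factorial in $\binom{n}{n/2+m} = n!/((n/2+m)!(n/2-m)!)$, yielding
\begin{align*}
\binom{n}{n/2+m} 2^{-n} \;\geq\; \frac{c_0}{\sqrt{n}} \cdot \bigl(1+\tfrac{2m}{n}\bigr)^{-(n/2+m)}\bigl(1-\tfrac{2m}{n}\bigr)^{-(n/2-m)}
\end{align*}
for a universal constant $c_0 > 0$, valid for $|m| \leq n/4$. I would then take logarithms and expand $\ln(1 \pm 2m/n)$ as a power series to obtain
\begin{align*}
(n/2+m)\ln\bigl(1+\tfrac{2m}{n}\bigr) + (n/2-m)\ln\bigl(1-\tfrac{2m}{n}\bigr) \;=\; \frac{2m^2}{n} + O\!\Bigl(\frac{m^4}{n^3}\Bigr).
\end{align*}
Since $m = nt$ with $t \leq 1/8$, the error term $m^4/n^3 = nt^4$ is bounded by $(1/64) \cdot nt^2$, so the logarithm is at most $c_3 nt^2$ for some universal $c_3$, giving the single-term bound $\binom{n}{n/2+m} 2^{-n} \geq (c_0/\sqrt{n}) e^{-c_3 nt^2}$.

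Next I would sum over integers $k$ in the block $[\lceil n/2 + nt \rceil, \lceil n/2 + nt \rceil + \lceil \sqrt{n}\rceil]$; this block has length $\gtrsim \sqrt{n}$, and for every $k$ in it the corresponding deviation $m = k - n/2$ satisfies $m/n \leq t + 1/\sqrt{n} + 1/n \leq 2t + O(1/\sqrt n)$, which remains bounded by $1/4$. Applying the single-term bound to each summand (with $t$ replaced by $m/n$, and using $(m/n)^2 \leq 2t^2 + 2/n$) yields a uniform lower bound $(c_0/\sqrt n)e^{-c_4 nt^2}$ per term, and the $\sqrt n$ terms combine to give $\Pr(X \geq n/2 + nt) \geq c_1 e^{-c_2 nt^2}$ as desired. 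For very small $n$ where the block argument is degenerate, the inequality follows trivially by choosing $c_1$ small enough, since $\Pr(X \geq n/2 + nt) \geq \Pr(X = n) = 2^{-n}$.

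The main obstacle is keeping the Stirling error terms and the higher-order corrections in $\ln(1 \pm 2m/n)$ under control uniformly over $t \in [0, 1/8]$; one must verify that the $O(m^4/n^3)$ Taylor remainder is dominated by the leading $2m^2/n$ term (which is where the upper bound $t \leq 1/8$ is used) and that the block summation argument survives rounding of $n/2 + nt$ to the nearest integer. Once these are handled, the constants $c_1, c_2$ are absolute and can be extracted explicitly, though we only need their existence.
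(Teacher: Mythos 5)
The paper does not actually prove this lemma: it is imported wholesale as Proposition 7.3.2 of Matou\v{s}ek--Vondr\'ak, so there is no in-paper argument for your proposal to diverge from. Your Stirling-based derivation is essentially the standard proof of that cited proposition, and it is correct. The two key steps are both sound: the single-term estimate $\binom{n}{n/2+m}2^{-n}\geq (c_0/\sqrt{n})\,e^{-c_3 m^2/n}$ follows from Stirling once one notes that $(1+x)\ln(1+x)+(1-x)\ln(1-x)=x^2+O(x^4)\leq Cx^2$ for $x=2m/n\leq 1/2$ (this is where $t\leq 1/8$ enters, keeping $n/2-m\geq n/4$ so the Stirling prefactor is bounded below), and summing $\lceil\sqrt{n}\rceil$ consecutive terms does absorb the $1/\sqrt{n}$ prefactor, since every $k$ in the block has deviation $m\leq nt+\sqrt{n}+2$ and hence $m^2/n\leq 3nt^2+O(1)$, costing only a constant factor in the exponential. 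The remaining issues are routine bookkeeping that you already flag: for $n$ below a fixed threshold one falls back on $\Pr(X\geq \mu+nt)\geq 2^{-n}$ and shrinks $c_1$, and for $n$ above it one checks that the block stays inside $\{0,\dots,n\}$ and inside the regime $m\leq n/4$ (which holds since $2t+O(1/\sqrt{n})\leq 1/4$ for large $n$). So the proposal is a complete, self-contained replacement for the external citation; the only thing it "buys" over the paper's treatment is that the constant $t\leq 1/8$ restriction and the universal constants $c_1,c_2$ become explicit rather than inherited from the reference.
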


\section{Additional Experimental Details and Results}
\label{app:sec_add_exp}

\subsection{More implementation details}

Our implementation is nearly identical to ATAC, except that we use a different regularizer for the critic. More details can be found in \citet{cheng2022adversarially}.

\subsection{Comparision of A-Crab and ATAC in different settings}

In this section, we show comparisons of A-Crab and ATAC in more different settings in \Cref{fig:app_comp_ACrab_ATAC}.

\begin{figure}[htbp]%
    \centering
    \subfigure[\centering walker2d-med]{{\includegraphics[width=4.5cm, height=3.5cm]{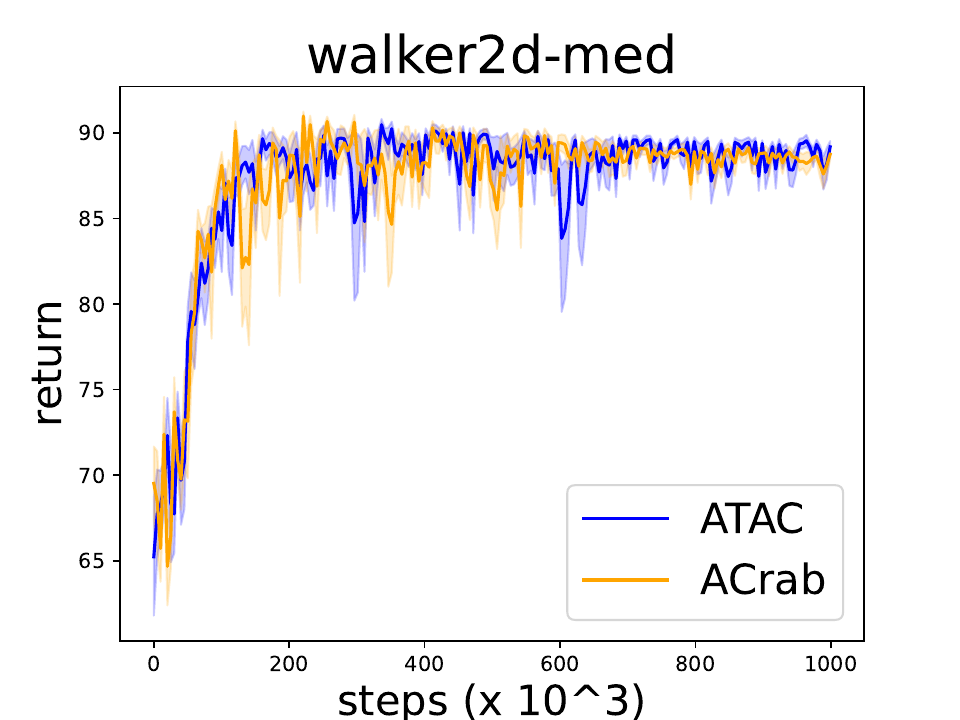} }}%
    \
    \subfigure[\centering walker2d-med-replay]{{\includegraphics[width=4.5cm, height=3.5cm]{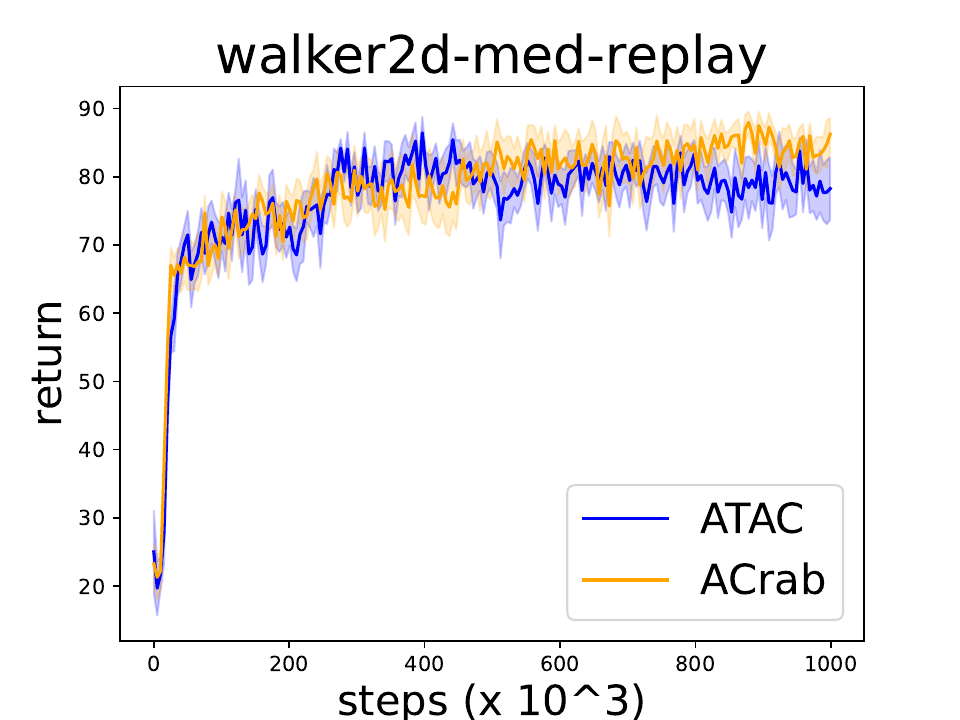} }}%
    \
    \subfigure[\centering walker2d-med-exp]{{\includegraphics[width=4.5cm, height=3.5cm]{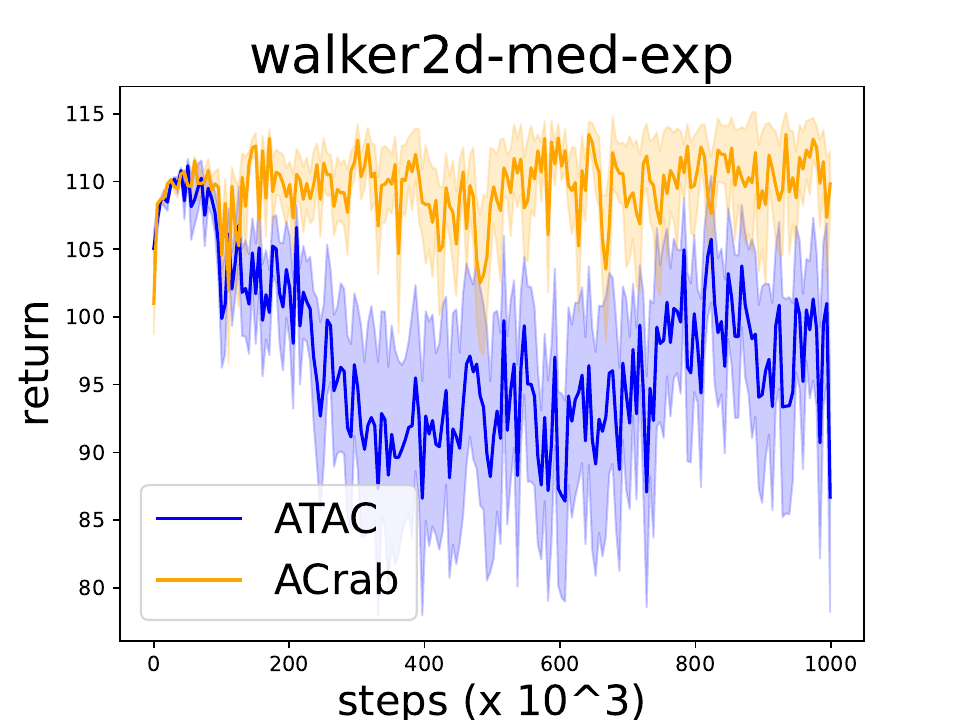} }}%
    \\ 
    \subfigure[\centering halfcheetah-rand]{{\includegraphics[width=4.5cm, height=3.5cm]{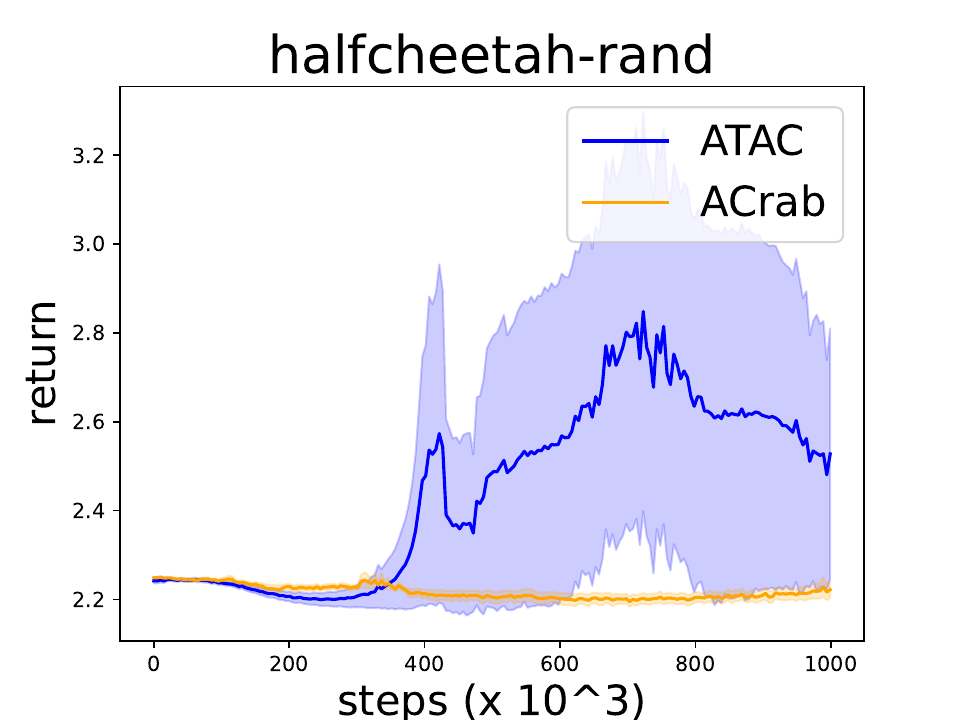} }}%
    \
    \subfigure[\centering halfcheetah-med]{{\includegraphics[width=4.5cm, height=3.5cm]{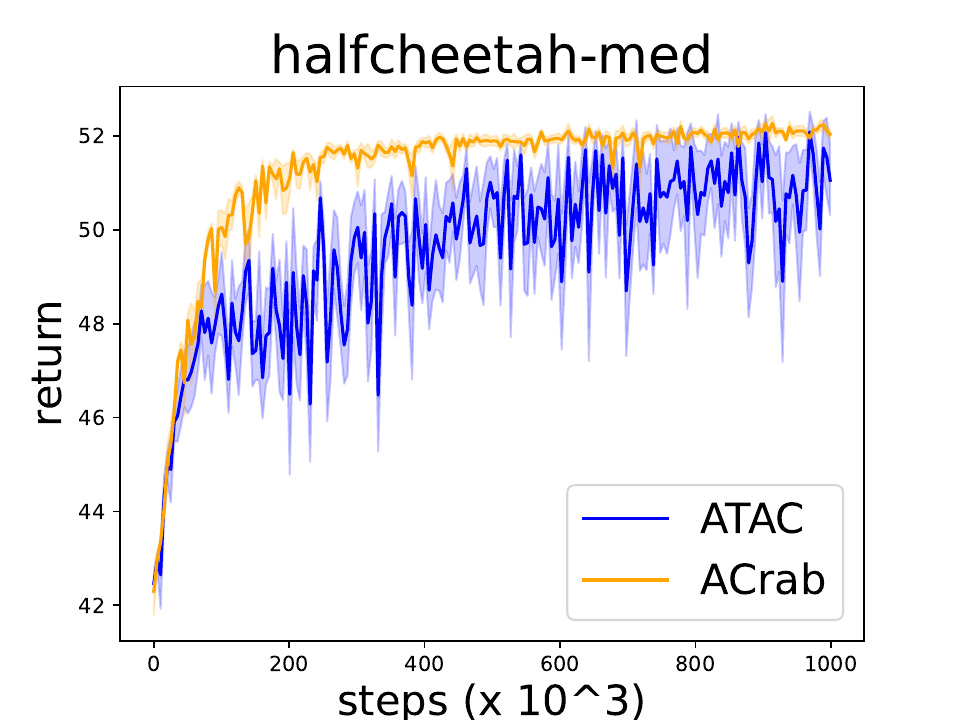} }}%
    \
    \subfigure[\centering halfcheetah-med-exp]{{\includegraphics[width=4.5cm, height=3.5cm]{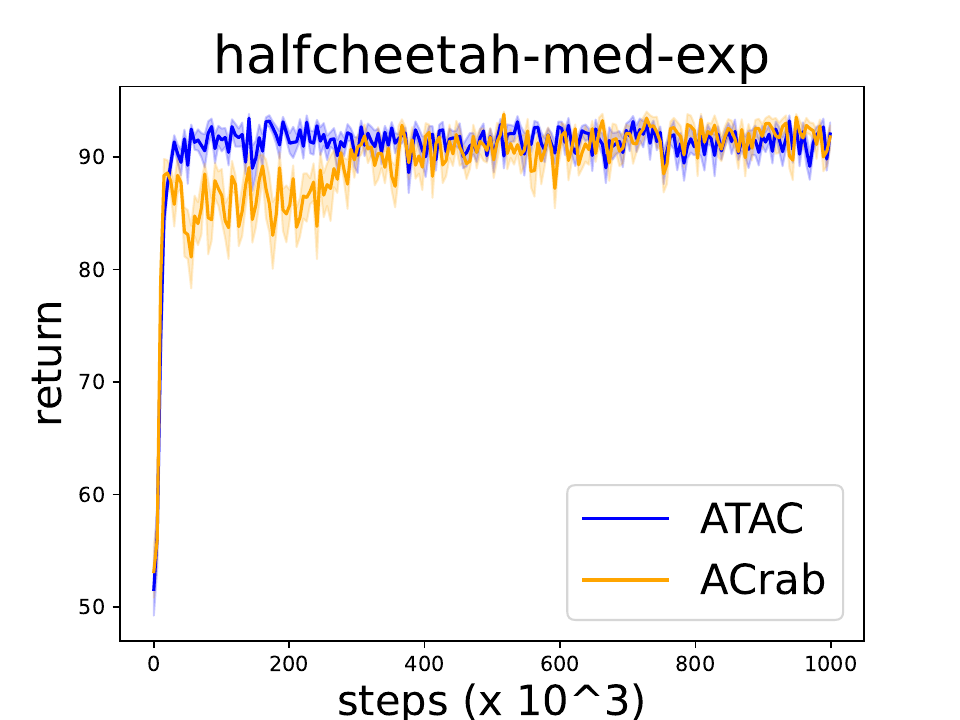} }}%
    \\
    \subfigure[\centering hoper-rand]{{\includegraphics[width=4.5cm, height=3.5cm]{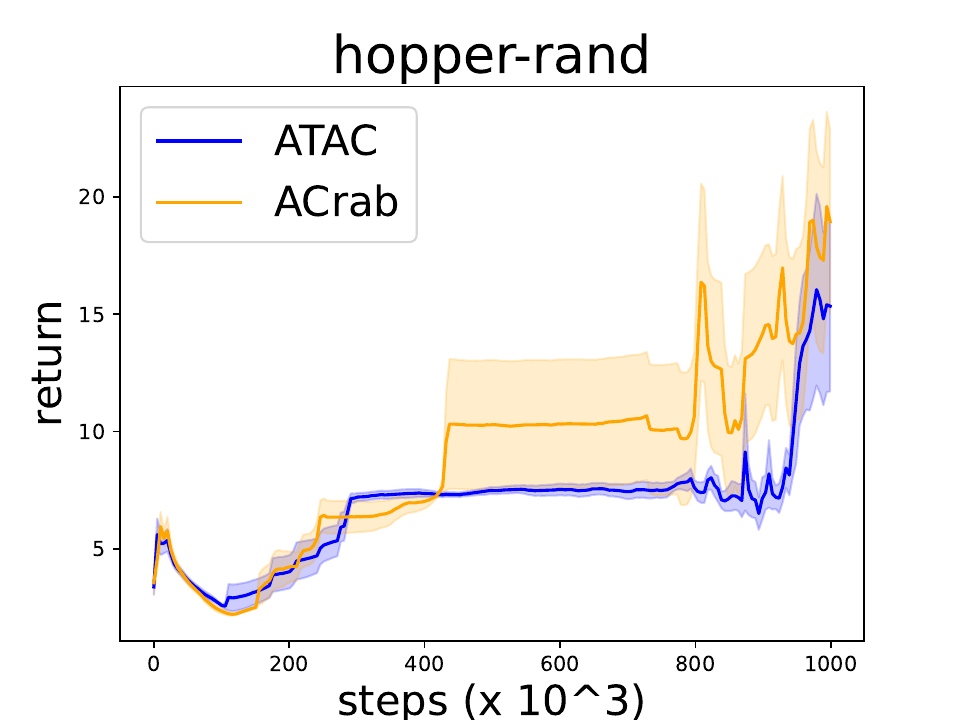} }}%
    \
    \subfigure[\centering hopper-med]{{\includegraphics[width=4.5cm, height=3.5cm]{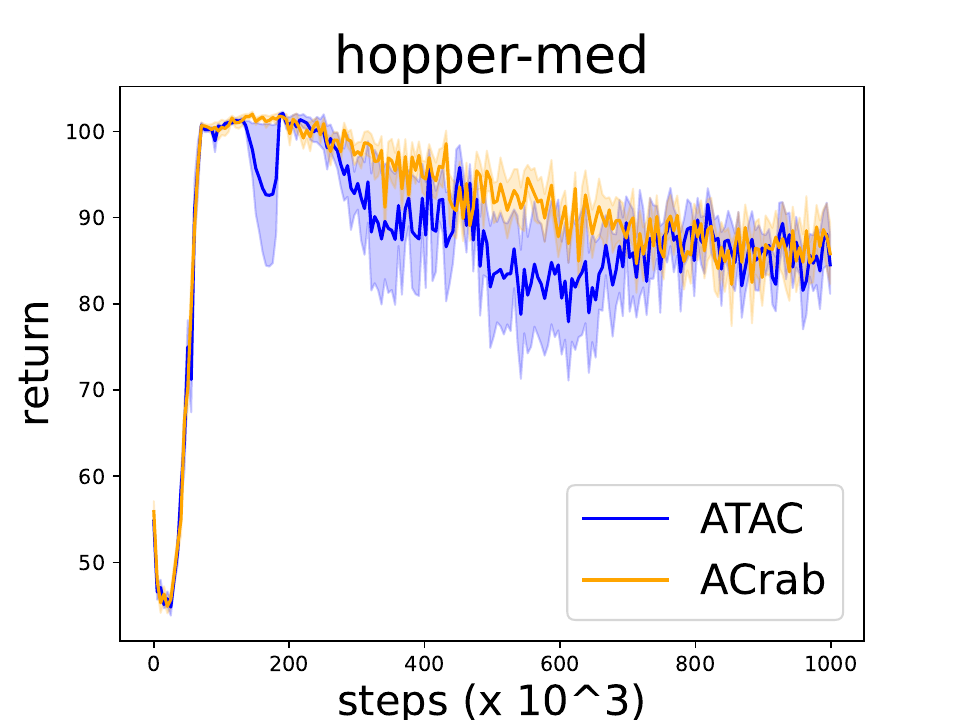} }}%
    \
    \subfigure[\centering hopper-med-replay]{{\includegraphics[width=4.5cm, height=3.5cm]{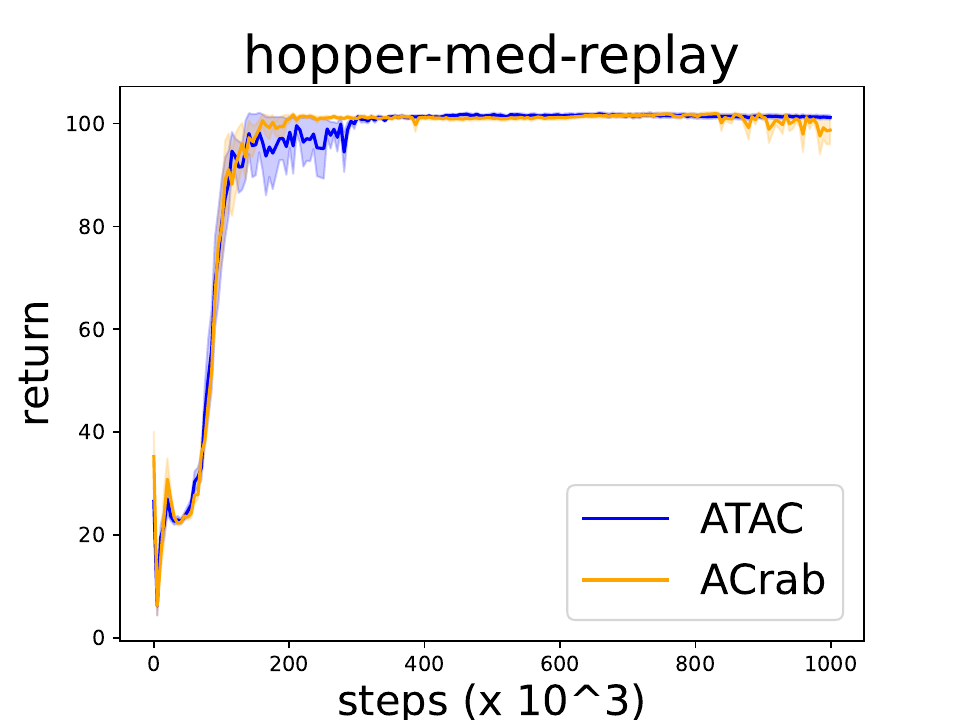} }}%

    \caption{Comparison of A-Crab and ATAC in more different settings.}%
    \label{fig:app_comp_ACrab_ATAC}%
\end{figure}

\subsection{Value of $\beta$ and $C_\infty$ for each setting}

In this section, we show our choices of $\beta$ and $C_\infty$ for each setting in \Cref{tbl:beta_C_infty}. Note that we directly choose the same value of $\beta$ as in ATAC, and select $C_\infty$ by a grid search over $\{1, 2, 5, 10, 20, 50, 100, 200\}$.

\begin{table*}[h]
\label[table]{tbl:beta_C_infty}
\caption{Choices of $\beta$ and $C_\infty$ for each setting. The value of $\beta$ is the same as in ATAC~\citep{cheng2022adversarially} and $C_\infty$ is chosen by a grid search.}
\centering
{\renewcommand{\arraystretch}{1.6}
\scalebox{0.88}{
\begin{tabular}{|c|c|c|}
\hline
                 & $\beta$  & $C_\infty$                                               \\ \hline
walker2d-random           & 64            & 50                           \\ \hline
walker2d-medium         & 64            & 1                           \\ \hline
walker2d-medium-replay           & 64            & 5                          \\ \hline
walker2d-medium-expert           & 64            & 2                           \\ \hline
hopper-random           & 64            & 2                           \\ \hline
hopper-medium         & 64            & 1                           \\ \hline
hopper-medium-replay           & 16            & 1                          \\ \hline
hopper-medium-expert           & 1            & 1                           \\ \hline
halfcheetah-random           & 16            & 10                           \\ \hline
halfcheetah-medium         & 4            & 1                           \\ \hline
halfcheetah-medium-replay           & 16            & 2                          \\ \hline
halfcheetah-medium-expert           & 0.062            & 5                          \\ \hline
\end{tabular}}}
\end{table*}

\end{document}